\theoremstyle{definition}
\newtheorem{Example}{Example}
\newtheorem{Remark}{Remark}
\theoremstyle{plain}
\newtheorem{theorem}{Theorem}
\newtheorem{corollary}{Corollary}
\newtheorem{Proposition}{Proposition}
\definecolor{darkred}{rgb}{1, 0.1, 0.3}
\definecolor{darkblue}{rgb}{0.1, 0.1, 1}
\definecolor{darkgreen}{rgb}{0,0.6,0.5}
\newcommand {\mm}[1] {\ifmmode{#1}\else{\mbox{\(#1\)}}\fi}
\newcommand{\eps}		{\varepsilon}
\newcommand{\dx}{\,\mathrm{d}}
\newcommand{\OT}{\mathrm{OT}}
\newcommand{\INN}{\mathrm{INN}}
\newcommand{\Lip}{\operatorname{Lip}}
\newcommand{\sigA}{\mathfrak{A}}
\newcommand{\R}{\mathbb{R}}
\newcommand{\N}{\ensuremath{\mathbb{N}}}
\newcommand{\E}{\mathbb{E}}
\DeclareMathOperator{\Cov}{Cov}
\newcommand{\X}{{\R^m}}
\newcommand{\Y}{{\R^n}}
\newcommand{\supp}{\textnormal{supp}}
\DeclareMathOperator{\dist}{dist}
\DeclareMathOperator*{\argmin}{argmin}
\DeclareMathOperator*{\argmax}{argmax}
\DeclareMathOperator*{\softmax}{softmax}
\DeclareMathOperator*{\diam}{diam}
\DeclareMathOperator*{\pad}{pad}
\DeclareMathOperator*{\data}{data}
\newcommand{\weakly}{\rightharpoonup}
\newcommand{\reg}{\text{reg}}
\begin{document}

\title{Stabilizing Invertible Neural Networks\\ Using Mixture Models}
 
\author{Paul Hagemann\footnotemark[1]  \footnotemark[2] \and Sebastian Neumayer\footnotemark[1]}
\maketitle
\footnotetext[1]{Institute of Mathematics,
	TU Berlin,
	Strasse des 17.~Juni 136, 10623 Berlin, Germany,
	neumayer@math.tu-berlin.de}
\footnotetext[2]{Department of Mathematical Modelling and Data Analysis, Physikalisch-Technische Bundesanstalt, Abbestr. 2-12, 10587 Berlin, Germany}
\begin{abstract}
    In this paper, we analyze the properties of invertible neural networks, which provide a way of solving inverse problems.
    Our main focus lies on investigating and controlling the Lipschitz constants of the corresponding inverse networks.
    Without such a control, numerical simulations are prone to errors and not much is gained against traditional approaches.
    Fortunately, our analysis indicates that changing the latent distribution from a standard normal one to a Gaussian mixture model resolves the issue of exploding Lipschitz constants.
    Indeed, numerical simulations confirm that this modification leads to significantly improved sampling quality in multimodal applications.
\end{abstract}
\section{Introduction}
Reconstructing parameters of physical models is an important task in science.
Usually, such problems are severely underdetermined and sophisticated reconstruction techniques are necessary.
Whereas classical regularisation methods focus on finding just the most desirable or most likely solution of an inverse problem, more recent methods focus on analyzing the complete distribution of possible parameters.
In particular, this provides us with a way to quantify how trustworthy the obtained solution is.
Among the most popular methods for uncertainty quantification are Bayesian methods \cite{gamerman2006markov}, which build up on evaluating the posterior using Bayes theorem, and Markov Chain Monte Carlo (MCMC) \cite{roberts2004}.
Over the last years, these classical frameworks and questions have been combined with neural networks (NNs).
To give a few examples, the Bayesian framework has been extended to NNs \cite{AO19, LW17}, auto-encoders have been combined with MCMC \cite{BYAV13} and generative adversarial networks have been applied for sampling from conditional distributions \cite{mirza2014conditional}.
Another approach in this direction relies on so-called invertible neural networks (INNs) \cite{ardizzone2018analyzing, inv_resnet, DBLP:conf/iclr/DinhSB17, glow} for sampling from the conditional distribution.
Such models have been successfully applied for stellar parameter estimation \cite{ksoll2020stellar}, conditional image generation \cite{ardizzone2020conditional} and polyphonic music transcription \cite{SlicedWasserstINN}.

Our main goal in this article is to deepen the mathematical understanding of INNs.
Let us briefly describe the general framework.
Assume that the random vectors $(X,Y)\colon \Omega\to\R^d\times\R^n$ are related via $f(X) \approx Y$ for some forward mapping $f\colon \R^d\to\R^n$.
Clearly, this setting includes the case of noisy measurements.
In practice, we often only have access to the values of $Y$ and want to solve the corresponding inverse problem.
To this end, our aim is to find an invertible mapping $T\colon\R^d\to\R^n\times \R^k$ such that
\begin{itemize}
    \item $T_y(X) \approx Y$, i.e., that $T_y$ is an approximation of $f$; 
    \item $T^{-1}(y,Z)$ with $Z\sim\mathcal N(0, I_k)$, $k=d-n$, is distributed according to $P_{(X|Y)}(y,\cdot)$.
\end{itemize}
Throughout this paper, the mapping $T$ is modelled as an INN, i.e., as NN with an explicitly invertible structure.
Therefore, no approximations are necessary and evaluating the backward pass is as simple as the forward pass.
Here, $Z$ should account for the information loss when evaluating the forward model $f$.
In other words, $Z$ parameterizes the ill-posed part of the forward problem $f$.

Our theoretical contribution is twofold.
First, we generalize the consistency result for perfectly trained INNs given in \cite{ardizzone2018analyzing} towards latent distributions $Z$ depending on the actual measurement $y$.
Here, we want to emphasize that our proof does not require any regularity properties for the distributions.
Next, we analyze the behavior of $\Lip(T^{-1})$ for a homeomorphism $T$ that approximately maps a multimodal distribution to the standard normal distribution.
More precisely, we obtain that $\Lip(T^{-1})$ explodes as the amount of mass between the modes and the approximation error decreases.
In particular, this implies for INNs that using $Z \sim N(0,I_k)$ can lead to instabilities of the inverse map for multimodal problems.
As the results are formulated in a very general setting, they could be also of interest for generative models, which include variational auto-encoders \cite{Kingma_2019} and generative adversarial networks (GANs) \cite{GANs}.
Another interesting work that analyzes the Lipschitz continuity of different INN architectures is \cite{behrmann2020understanding}, where also the effects of numerical non-invertibility are illustrated.

The performed numerical experiments confirm our theoretical findings, i.e., that $Z\sim\mathcal N(0, I_k)$ is unsuitable for recovering multimodal distributions.
Note that this is indeed a well-known problem in generative modelling \cite{huang2018neural}.
Consequently, we propose to replace $Z \sim \mathcal N(0, I_k)$ by a Gaussian mixture model with learnable probabilities that depend on the measurements $y$.
In some sense, this enables the model to decide how many modes it wants to use in the latent space.
Therefore, splitting or merging distributions should not be necessary anymore.
Such an approach has lead to promising results for GANs \cite{disc_mani, DeLiGAN17} and variational auto-encoders \cite{VAE_CLUSTER}.
In order to learn the probabilities corresponding to the modes, we employ the Gumbel trick \cite{jang2016categorical, maddison2016concrete} and a $L^2$-penalty for the networks weights, enforcing the Lipschitz continuity of the INN.
More precisely, the $L^2$-penalty ensures that splitting or merging distributions is unattractive.
Similar approaches have been successfully applied for generative modelling, see \cite{VAE_CLUSTER, dinh2019rad}.

\paragraph{Outline}
We recall the necessary theoretical foundations in Section~\ref{sec:Pre}.
Then, we introduce the continuous optimization problem in Section~\ref{sec:INN} and discuss its theoretical properties.
In particular, we investigate the effect of different latent space distributions on the Lipschitz constants of an INN.
The training procedure for our proposed model is discussed in Section~\ref{sec:NumImp}.
For minimizing over the probabilities of the Gaussian mixture model, we employ the Gumbel trick.
In Section~\ref{sec:Examples}, we demonstrate the efficiency of our model on three inverse problems that exhibit a multimodal structure.
Conclusions are drawn in Section~\ref{sec:Conclusions}.

\section{Preliminaries}\label{sec:Pre}
\paragraph{Probability Theory and Conditional Expectations}

First, we give a short introduction to the required probability theoretic foundations based on \cite{AFP2000,B1996,FL2007,K2008}.
Let $(\Omega,\sigA,P)$ be a probability space and $\X$ be equipped with the Borel $\sigma$-algebra $\mathcal B(\X)$.
A \emph{random vector} is a measurable mapping $X\colon\Omega\to\X$ with \emph{distribution} $P_X \in \mathcal P(\X)$ defined by $P_X(A)=P(X^{-1}(A))$ for $A\in\mathcal B(\X)$.
For any random vector $X$, there exists a smallest $\sigma$-algebra on $\Omega$ such that $X$ is measurable, which is denoted with $\sigma(X)$.
Two random vectors $X$ and $Y$ are called equal in distribution if $P_X=P_Y$ and we write \smash{$X \overset{d}{=} Y$}.
The space $\mathcal P(\X)$ is equipped with the \emph{weak convergence},
i.e., a sequence $(\mu_k )_k \subset \mathcal P(\X)$
converges \emph{weakly} to $\mu$, written $\mu_k \weakly \mu$, if
\begin{equation}
\lim_{k \to \infty} \int_{\X} \varphi \dx \mu_k = \int_{\X} \varphi \dx \mu \quad \text{for all } \varphi \in \mathcal  C_b(\X),
\end{equation} 
where $\mathcal  C_b(\X)$ denotes the space of continuous and bounded functions.
The \emph{support of a probability measure} $\mu$ is the closed set 
\[\supp(\mu) \coloneqq \bigl\{ x \in \X: B \subset \X \text{ open, }x \in B  \implies \mu(B) >0\bigr\}.\]
In case that there exists $\rho_\mu \colon\X\to\R$ with $\mu(A)=\int_A \rho_\mu \dx x$ for all $A\in \mathcal B(\X)$, this $\rho_\mu$ is called the \emph{probability density} of $\mu$.
For $\mu \in \mathcal P(\X)$ and $1\leq p\leq \infty$, let $L^p(\X)$ be the Banach space (of equivalence classes)
of complex-valued functions with norm
\[\|f\|_{L^p(\X)} = \left( \int_\X |f(x)|^p \dx x \right)^\frac1p < \infty.\]
In a similar manner, we denote the Banach space of (equivalence classes of) random vectors $X$ satisfying $\E(|X|^p)<\infty$ by $L^p(\Omega,\sigA,P)$.
The \emph{mean} of a random vector $X$ is defined as
$\E(X)=(\E(X_1),...,\E(X_m))^T$.
Further, for any $X\in L^2(\Omega,\sigA,P)$ 
the \emph{covariance matrix} of $X$ is defined by $\Cov(X)=(\Cov(X_i,X_j))_{i,j=1}^m$, where $\Cov(X_i,X_j)=\E((X_i-\E(X_i))(X_j-\E(X_j)))$.

For a measurable function $T\colon \X \to \Y $, the \emph{push-forward} measure of $\mu \in \mathcal P(\X)$ by $T$ is defined as $T_\# \mu  \coloneqq \mu \circ T^{-1}$.
A measurable function $f$ on $\Y$ is integrable with respect to  $\nu \coloneqq T_\# \mu$ 
if and only if the composition $f \circ T$ is integrable with respect $\mu$, i.e.,
\begin{align}\label{eq:push_f}
	\int_{\Y} f (y) \dx \nu (y) &= \int_{\X} f(T(x)) \dx \mu(x).
\end{align}
The \emph{product measure} $\mu \otimes \nu \in \mathcal P(\X \times \Y)$ of $\mu \in \mathcal P (\X)$ and $\nu \in \mathcal P(\Y)$ is the unique measure satisfying $\mu \otimes \nu(A \times B) = \mu(A) \nu(B)$ for all $A \in \mathcal B(\X)$ and $B \in \mathcal B(\Y)$.

If $X\in L^1(\Omega,\sigA,P)$ and $Y\colon\Omega\to\Y$ is a second random vector, then there exists a unique $\sigma(Y)$-measurable random vector $\E(X|Y)$ called \emph{conditional expectation} of $X$ given $Y$ with the property that
\[
\int_A X \dx P = \int_A \E(X|Y)\dx P \quad \text{for all }A\in\sigma(Y).
\]
Further, there exists a $P_Y$-almost surely unique, measurable mapping $g\colon \Y\to\X$ such that $\E(X|Y)=g\circ Y$.
Using this mapping $g$, we define the \emph{conditional expectation} of $X$ given $Y=y$ as $\E(X|Y=y)=g(y)$.
For $A\in\sigA$ the \emph{conditional probability} of $A$ given $Y=y$ is defined by $P(A|Y=y)=\E(1_A|Y=y)$.
More generally, \emph{regular conditional distributions} are defined in terms of Markov kernels, i.e., there exists a mapping $P_{(X|Y)} \colon \Y \times \mathcal B(\X) \to \R$ such that
\begin{enumerate}[(i)]
    \item $P_{(X|Y)}(y,\cdot)$ is a probability measure on $ \mathcal B(\X)$ for every $y \in \Y$,
    \item $P_{(X|Y)}(\cdot,A)$ is measurable for every $A \in \mathcal B(\X)$ and for all $A \in \mathcal B(\X)$ it holds
    \begin{equation}\label{def:RegCondDist}
        P_{(X|Y)}(\cdot,A) = E(1_A \circ X| Y =\,\cdot) \qquad \text{$P_Y$-a.e.}
    \end{equation}
\end{enumerate}
For any two such kernels $P_{(X|Y)}$ and $Q_{(X|Y)}$ there exists $B \in \mathcal B(\Y)$ with $P_Y(B) = 1$ such that for all $y\in B$ and $A \in \mathcal B(\X)$ it holds $P_{(X|Y)}(y,A) =Q_{(X|Y)}(y,A)$.
Finally, let us conclude with two useful properties, which we need throughout this paper.
\begin{enumerate}[(i)]
    \item For random vectors $X\colon\Omega\to\X$ and $Y\colon\Omega\to\Y$ and measurable $f \colon \Y\to \X$ with $f(Y)X \in L^1(\Omega,\sigA,P)$ it holds $E(f(Y)X | Y) = f(Y) E(X | Y)$, where the products are meant component-wise.
    In particular, this implies for $y \in \Y$ that
    \begin{equation}\label{eq:PullOut}
        E(f(Y)X | Y=y) = f(y) E(X | Y=y).
    \end{equation}
    \item For random vectors $X\colon\Omega\to\X$, $Y\colon\Omega\to\Y$ and a measurable map $T \colon \X\to \R^p$ it holds
    \begin{equation}\label{eq:RegPush}
        P_{(X | Y)}\bigl(\cdot, T^{-1}(\cdot)\bigr) = P_{(T \circ X|Y)}.
    \end{equation}
\end{enumerate}

\paragraph{Discrepancies}\label{sec:Discr}
Next, we introduce a way to quantify the distance between two probability measures.
To this end, choose a measurable, symmetric and bounded function $K\colon \X \times \X \rightarrow \mathbb R$ that is integrally strictly positive definite, i.e., for any $f \neq 0 \in L^2(\X$)
it holds
\[
\iint\limits_{\X \times \X} K(x,y) f(x) f(y) \dx x \dx y > 0.
\]
Note that this property is indeed stronger than $K$ being strictly positive definite.

Then, the corresponding \emph{discrepancy} $\mathscr{D}_K(\mu,\nu)$ is defined via
\begin{align} \label{mercer_1}
\mathscr{D}^2_K(\mu,\nu)
=& \iint\limits_{\X\times\X} K \dx\mu \dx\mu - 2\iint\limits_{\X\times\X} K\dx\mu \dx\nu
+\iint\limits_{\X\times\X} K\dx\nu \dx\nu.
\end{align}
Due to our requirements on $K$, the discrepancy $\mathscr{D}_K$ is a metric, see \cite{SGFSL10}.
In particular, it holds $\mathscr{D}_K(\mu,\nu)=0$ if and only if $\mu = \nu$.
In the following remark, we require the space $C_0(\X)$ of continuous functions decaying to zero at infinity.

\begin{Remark}\label{rem:Disc_Conv}
Let $K(x,y) = \psi(x-y)$ with $\psi \in C_0(\X) \cap L^1(\X)$ and assume there exists an $l \in \N$ such that
\[\int_\X \frac{1}{\hat \psi(x) (1 + \vert x \vert)^l} \dx x < \infty,\]
where $\hat \psi$ denotes the Fourier transform of $\psi$.
Then, the discrepancy $\mathscr D_K$ metrizes the weak topology on $\mathcal P (\R^m)$, see \cite{SGFSL10}.
\end{Remark}

\paragraph{Optimal Transport}
For our theoretical investigations, Wasserstein distances turn out to be a more appropriate metric than discrepancies.
This is mainly due to the fact that spatial distance is directly encoded into this metric.
Actually, both concepts can be related to each other under relatively mild conditions, see also Remark~\ref{rem:Disc_Conv}.
A more detailed overview on the topic can be found in \cite{Ambrosio,S2015}.
Let $\mu, \nu \in \mathcal P(\X)$ and $c \in C(\X \times \X)$ be a non-negative and continuous function. 
Then, the \emph{Kantorovich problem of optimal transport} (OT) reads
 \begin{equation}\label{Monge_Kantorovich_problem}
 \OT(\mu,\nu) \coloneqq \inf_{\pi \in\Pi(\mu,\nu)} \int_{\X \times \X} c(x,y) \dx \pi(x,y),
\end{equation}
where $\Pi(\mu,\nu)$ denotes the weakly compact set of joint probability measures $\pi$ on $\X\times\X$ with marginals $\mu$ and $\nu$.
Recall that the OT functional \smash{$\pi \mapsto  \int_{\X \times \X} c\, \dx \pi$} is weakly lower semi-continuous, 
\eqref{Monge_Kantorovich_problem} has a solution and every such minimizer $\hat \pi$ is called optimal transport plan.
Note that optimal transport plans are in general not unique.

For $\mu,\nu \in \mathcal P_p(\X) \coloneqq \{\mu \in \mathcal P(\X): \int_\X \vert x \vert^p \dx \mu(x)<\infty\}$, $p \in [1,\infty)$, the \emph{$p$-Wasserstein distance} $W_p$ is defined by
\begin{align} \label{eq:OTprimal}
W_p(\mu,\nu) \coloneqq \biggl( \min_{\pi \in \Pi(\mu,\nu)} 
\int_{\X \times \X} \vert x-y \vert^p \mathrm{d} \pi(x,y) \biggr)^\frac{1}{p}.
\end{align}
Recall that this defines a metric on $\mathcal P_p(\X)$, which satisfies $W_p(\mu_m,\mu) \to 0$ if and only if $\int_\X \vert x \vert ^p \dx \mu_m(x) \to \int_\X \vert x \vert^p \dx \mu(x)$ and $\mu_m\weakly \mu$.
If $\mu, \nu \in \mathcal P_p(\X)$ and at least one of them has a density with respect to the Lebesgue measure, then \eqref{eq:OTprimal} has a unique solution for any $p \in (1, \infty)$.
For $\mu,\nu \in \mathcal P(\R^m)$ and sets $A, B \subset \R^m$, we estimate
\begin{align}
    W_p^p(\mu,\nu) &= \min_{\pi \in\Pi(\mu,\nu)} \int_{\R^n \times \R^n} \vert x-y \vert^p \dx \pi(x,y) \geq \min_{\pi \in\Pi(\mu,\nu)} \int_{A \times B} \vert x-y \vert^p \dx \pi(x,y)\label{eq:EstWasserstein}\\ 
    & \geq \dist(A,B)^p \min_{\pi \in\Pi(\mu,\nu)} \pi(A \times B) \geq \dist(A,B)^p \max\bigl\{\mu(A) - \nu(B^c), \nu(B) - \mu(A^c)\bigr\}\notag.
\end{align}

\begin{Remark}\label{rem:DiscOT}
As mentioned in Remark~\ref{rem:Disc_Conv}, $\mathscr D_K$ metrizes the weak convergence under certain conditions.
Further, convergence of the $p$-th moments always holds if all measures are supported on a compact set.
In this case, weak convergence, convergence in the Wasserstein metric and convergence in discrepancy are all equivalent.
As these requirements are usually fulfilled in practice, most theoretical properties of INNs based on $W_1$ carry over to $\mathscr D_K$. 
\end{Remark}

\paragraph{Invertible Neural Networks}
Throughout this paper, an INN $T\colon \R^d\to\R^n\times\R^k$, $k=d-n$, is constructed as composition $T = T_L \circ P_L \circ \ldots \circ T_1 \circ P_1 $, where the $(P_l)_{l=1}^L$ are random (but fixed) permutation matrices and the $T_l$ are defined by
\begin{equation}\label{eq:DefBlock}
    T_l(x_1,x_2) = (v_1,v_2) = \bigl(x_1 \odot \exp(s_{l,2}(x_2))+t_{l,2}(x_2),\, x_2 \odot \exp(s_{l,1}(v_1))+t_{l,1}(v_1)\bigr),
\end{equation}
where $x$ is split arbitrarily into $x_1$ and $x_2$ (for simplicity of notation we assume that $x_1 \in \R^n$ and $x_2 \in \R^k$) and $\odot$ denotes the Hadamard product, see \cite{ardizzone2018analyzing}.
Note that there is a computational precedence in this construction, i.e., the computation of $v_2$ requires the knowledge of $v_1$.
The continuous functions $s_{l,1}\colon \mathbb{R}^n \rightarrow \mathbb{R}^k$, $s_{l,2}\colon \mathbb{R}^k \rightarrow \mathbb{R}^n$, $t_{l,1}\colon\mathbb{R}^n \rightarrow \mathbb{R}^k $ and $t_{l,2}\colon \mathbb{R}^k \rightarrow \mathbb{R}^n$ are NNs and do not need to be invertible themselves.
In order to emphasize the dependence of the INN on the parameters $u$ of these NNs, we use the notation $T(\cdot\,;u)$.
The inverse of $T_l$ is explicitly given by
\begin{equation}\label{eq:DefInvBlock}
    T_l^{-1}(v_1,v_2) = (x_1,x_2) = \bigl((v_1 - t_{l,2}(x_2)) \oslash \exp(s_{l,2}(x_2)),\, (v_2-t_{l,1}(v_1)) \oslash \exp(s_{l,1}(v_1))\bigr),
\end{equation}
where $\oslash$ denotes the point-wise quotient.
Hence, the whole INN is invertible.
Clearly, $T$ and $T^{-1}$ are both continuous maps. 

Other architectures for constructing INNs include the real NVP architecture introduced in \cite{DBLP:conf/iclr/DinhSB17}, where the layers have the simplified form
\[T_l(x_1,x_2) = \bigl(x_1,\, x_2 \odot \exp(s(x_1))+t(x_1)\bigr),\]
or an approach that builds on making residual neural networks invertible, see \cite{inv_resnet}.
Compared to real NVP layers, the chosen approach allows more representational freedom in the first component.
Unfortunately, this comes at the cost of harder Jacobian evaluations, which are not needed in this paper though.
Invertible residual neural networks have the advantage that they come with sharp Lipschitz bounds.
In particular, they rely on spectral normalisation in each layer, effectively bounding the Lipschitz constant. 
Further, there are neural ODEs \cite{chen2018neural}, where a black box ODE solver is used in order to simulate the output of a continuous depth residual NN, and the GLOW \cite{glow} architecture, where the $P_l$ are no longer fixed as for real NVP and hence also learnable parameters.

\section{Analyzing the Continuous Optimization Problem}\label{sec:INN}
Given a random vector $(X,Y)\colon \Omega\to\R^d\times\R^n$ with realizations $(x_i,y_i)_{i=1}^N$, our aim is to recover the regular conditional distribution $P_{(X|Y)}(y,\cdot)$ for arbitrary $y$ based on the samples.
For this purpose, we want to find a homeomorphism $T\colon\R^d\to\R^n\times \R^k$ such that for some fixed $Z\colon \Omega\to\R^k$, $k=d-n$, and any realization $y$ of $Y$, the expression $T^{-1}(y,Z)$ is approximately $P_{(X|Y=y)}$-distributed, see \cite{ardizzone2018analyzing}.
Note that the authors of \cite{ardizzone2018analyzing} assume $Z \sim \mathcal N(0,I_k)$, but in fact the approach does not rely on this particular choice.
Here, we model $T$ as homeomorphism with parameters $u$, for example an INN, and choose the best candidate by optimizing over the parameters.
To quantify the reconstruction quality in dependence on $u$, three loss functions are constructed according to the following two principles:
\begin{enumerate}[(i)]
\item First, the data should be matched, i.e., $T_y(X;u)\approx Y$, where $T_y$ denotes the output part corresponding to $Y$.
For inverse problems, this is equivalent to learning the forward map.
Here, the fit between $T_y(X;u)$ and $Y$ is quantified by the squared $L^2$-norm
$$
L_y(u)=E\bigl(\vert T_y(X;u)-Y\vert^2\bigr).
$$
\item Second, the distributions of $X$ and $(Y,Z)$ should be close to the respective distributions $T^{-1}(Y,Z;u)$ and $T(X;u)$.
To quantify this, we use a metric $D$ on the space of probability measures and end up with the loss functions
\begin{equation}\label{eq:LxandLz}
L_x(u)=D\bigl(P_{T^{-1}(Y,Z;u)},P_{X}\bigr) \quad\text{and}\quad L_z(u)=D\bigl(P_{T(X;u)},P_{(Y,Z)}\bigr).
\end{equation}
Actual choices for $D$ are discussed in Section~\ref{sec:NumImp}.
\end{enumerate}
In practice, we usually minimize a conic combination of these objective functions, i.e., for some $\lambda=(\lambda_1,\lambda_2)\in\R_{\geq0}^2$ we solve
$$
\hat u\in \argmin_{u} \bigl\{L_y(u)+\lambda_1 L_x(u)+\lambda_2 L_z(u)\bigr\}.
$$

First, note that $L_y(u) = 0 \Leftrightarrow T_y(X;u) = Y$ a.e.~and
\[L_x(u) = 0 \Leftrightarrow T^{-1}(Y,Z;u) \overset{d}{=} X \Leftrightarrow (Y,Z) \overset{d}{=} T(X;u) \Leftrightarrow L_z(u)=0.\]
Further, the following remark indicates that controlling the Lipschitz constant of $T^{-1}$ also enables us to estimate the $L_x$ loss in terms of $L_z$.
\begin{Remark}\label{Rem:Lx_Est}
Note that it holds $P_{T^{-1}(Y,Z;u)} = {T^{-1}}_\# P_{(Y,Z)}$ and $P_{T(X;u)} = T_\# P_X$.
Let $\hat \pi$ denote an optimal transport plan for $W_1(P_{(Y,Z)},P_{T(X;u)})$.
If $T^{-1}$ is Lipschitz continuous, we can estimate
\begin{align}
W_1\bigl(P_X,P_{T^{-1}(Y,Z;u)}\bigr) &= W_1({T^{-1}}_\# P_{T(X;u)},{T^{-1}}_\# P_{(Y,Z)})\\
&\leq \int_{\R^d} \vert x - y \vert \dx (T^{-1} \times T^{-1})_\# \hat \pi(x,y)\\
&= \int_{\R^{n+k}} \vert T^{-1}(x) - T^{-1}(y) \vert \dx \hat \pi(x,y) \leq \Lip(T^{-1}) W_1\bigl(P_{(Y,Z)},P_{T(X;u)}\bigr),\end{align}
i.e., we get $L_x(u) \leq \Lip(T^{-1}) L_z(u)$ when choosing $D = W_1$.
\end{Remark}
Under the assumption that all three loss functions are zero, the authors of \cite{ardizzone2018analyzing} proved that $T^{-1}(y,Z) \sim P_{(X|Y)}(y,\cdot)$.
In other words, $T$ can be used to efficiently sample from the true posterior distribution $P_{(X|Y)}(y,\cdot)$.
However, their proof requires that the distributions of the random vectors $X$ and $Z$ have densities and that $Z$ is independent of $Y$.
As we are going to see in Section~\ref{sec:NumImp}, it is actually beneficial to choose $Z$ dependent on $Y$.
We overcome the mentioned limitations in the next theorem and provide a rigorous prove of this important result.
\begin{theorem}\label{sampl_standard_normal}
Let $X\colon\Omega\to\R^d$, $Y\colon\Omega\to \R^n$ and $Z\colon\Omega\to\R^k$ be random vectors. 
Further, let $T=(T_y,T_z)\colon \R^d\to\R^n\times\R^k$ be a homeomorphism such that \smash{$T\circ X\overset{d}{=}(Y,Z)$} and $T_y \circ X = Y$ a.e.
Then, it holds
\[P_{(X|Y)}(y,B) = {T^{-1}}_\#\bigl(\delta_y \otimes P_{(Z|Y)}(y,\cdot)\bigr)(B)\]
for all $B \in \mathcal B (\R^{d})$ and $P_Y$-a.e.~$y \in \R^n$. 
\end{theorem}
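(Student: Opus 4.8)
The plan is to reduce the statement to the defining property of regular conditional distributions, namely that for all $A \in \mathcal B(\R^d)$ and $C \in \mathcal B(\R^n)$ one has $\int_{Y^{-1}(C)} 1_A \circ X \dx P = \int_C P_{(X|Y)}(y,A) \dx P_Y(y)$, together with the $P_Y$-a.e.\ uniqueness of such a kernel. So it suffices to verify that the candidate kernel $\kappa(y,\cdot) \coloneqq {T^{-1}}_\#(\delta_y \otimes P_{(Z|Y)}(y,\cdot))$ is itself a valid regular conditional distribution for $X$ given $Y$, i.e.\ that it is a Markov kernel and satisfies $\kappa(\cdot,A) = E(1_A \circ X \mid Y = \cdot)$ $P_Y$-a.e.\ for every $A$. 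The first item (measurability in $y$, probability measure in $A$) is routine from measurability of $y \mapsto P_{(Z|Y)}(y,\cdot)$ and continuity of $T^{-1}$, so the real content is the second item.

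First I would rewrite $\kappa(y,A)$ explicitly. Since $T^{-1}$ is a homeomorphism, $\kappa(y,A) = (\delta_y \otimes P_{(Z|Y)}(y,\cdot))(T(A))$. Writing $T = (T_y,T_z)$ and using that $\delta_y$ is a point mass in the first coordinate, one gets
\[
\kappa(y,A) = P_{(Z|Y)}\bigl(y,\; \{z \in \R^k : (y,z) \in T(A)\}\bigr) = P_{(Z|Y)}\bigl(y,\; T_z(T_y^{-1}\{y\}\cap \cdots)\bigr),
\]
but cleaner is to avoid pushing through $T$ and instead argue at the level of the joint law. The key hypothesis $T\circ X \overset{d}{=} (Y,Z)$ says $P_{(Y,Z)} = T_\# P_X$, equivalently $P_X = {T^{-1}}_\# P_{(Y,Z)}$, and by property \eqref{eq:RegPush} this disintegrates: $P_{(X|Y)}$ should be the pushforward under $T^{-1}$ of the conditional law of $(Y,Z)$ given $Y$. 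So the plan is: (1) show $P_{(Y,Z)|Y}(y,\cdot) = \delta_y \otimes P_{(Z|Y)}(y,\cdot)$ $P_Y$-a.e., using $T_y \circ X = Y$ a.e.\ — which forces the $Y$-marginal of $T\circ X$ to equal $Y$ itself, not merely in distribution, pinning the first coordinate to $\delta_y$; (2) transport this disintegration through $T^{-1}$ via \eqref{eq:RegPush}, identifying $T^{-1}\circ(Y,Z) \overset{d}{=} X$ with matching conditioning variable. Concretely, for step (1) I would check that for $A = C \times D$ with $C \in \mathcal B(\R^n)$, $D \in \mathcal B(\R^k)$,
\[
P\bigl((Y,Z) \in C\times D \mid Y = y\bigr) = 1_C(y)\, P(Z \in D \mid Y = y) \qquad P_Y\text{-a.e.},
\]
which follows from the pull-out property \eqref{eq:PullOut} applied with $f = 1_C$: $E(1_C(Y)1_D(Z)\mid Y) = 1_C(Y) E(1_D(Z)\mid Y)$. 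A $\pi$-$\lambda$ / monotone class argument upgrades this from rectangles to all of $\mathcal B(\R^n \times \R^k)$.

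For the transport step, the careful point is that conditioning is only defined $P_Y$-a.e., so I would fix the null set $B^c$ outside of which $T_y\circ X = Y$ holds and all the above identities hold, and work on $B$. Then for $A \in \mathcal B(\R^d)$ and $C \in \mathcal B(\R^n)$,
\[
\int_C \kappa(y,A)\dx P_Y(y) = \int_C \bigl(\delta_y \otimes P_{(Z|Y)}(y,\cdot)\bigr)(T(A))\dx P_Y(y) = \int_{\R^n} 1_C(y)\,1_{T(A)}(y,z)\dx P_{(Y,Z)|Y}(y,\cdot)\, \cdots
\]
collapses via step (1) and the tower property to $\int 1_C(Y)\,1_{T(A)}(Y,Z)\dx P = P\bigl(Y \in C,\ T\circ X \in T(A)\bigr)$ — here I use $T\circ X \overset{d}{=}(Y,Z)$ jointly with $Y$, which needs a small argument that $(Y, T\circ X) \overset{d}{=} (Y, (Y,Z))$, again from $T_y\circ X = Y$ a.e. Since $T$ is a bijection, $T\circ X \in T(A) \iff X \in A$, so this equals $P(Y\in C,\ X\in A) = \int_C P_{(X|Y)}(y,A)\dx P_Y(y)$. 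As $C$ is arbitrary, $\kappa(\cdot,A) = P_{(X|Y)}(\cdot,A)$ $P_Y$-a.e., and the $P_Y$-a.e.\ uniqueness of regular conditional distributions finishes it.

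The main obstacle I anticipate is handling the interplay between "$\overset{d}{=}$" statements and the conditioning variable cleanly: the hypotheses give $T\circ X \overset{d}{=} (Y,Z)$ (a statement about laws on $\R^d$) plus $T_y\circ X = Y$ a.e.\ (a pointwise statement), and I need the *joint* law of $(Y, T\circ X)$ to match that of $(Y,(Y,Z))$ so that conditioning on $Y$ commutes with the pushforward — otherwise \eqref{eq:RegPush} cannot be applied with the right conditioning variable. Verifying this joint-distribution upgrade, and being scrupulous about the $P_Y$-null sets on which the various a.e.\ identities fail (so that the final kernel is genuinely well-defined and equal to $P_{(X|Y)}$ off a single null set), is where the real care is needed; the measure-theoretic manipulations themselves are standard.
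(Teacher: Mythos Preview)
Your proposal is correct and follows essentially the same route as the paper: both reduce (via \eqref{eq:RegPush}) to identifying the conditional law of $T\circ X$ given $Y$ with $\delta_y \otimes P_{(Z|Y)}(y,\cdot)$, verify this on rectangles using the pull-out property \eqref{eq:PullOut}, and resolve the key ``joint-distribution'' step by using $T_y\circ X = Y$ a.e.\ to absorb the conditioning event $\{Y\in C\}$ into an event for $T\circ X$ alone before invoking $T\circ X \overset{d}{=}(Y,Z)$. The obstacle you anticipate is precisely the paper's central computation, and your proposed resolution matches it.
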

\begin{proof}
As $T$ is a homeomorphism, it suffices to show \smash{$P_{(X|Y)}(y,T^{-1}(B)) = (\delta_y \otimes P_{(Z|Y)}(y,\cdot))(B)$} for all $B \in \mathcal B (\R^{n+k})$ and $P_Y$-a.e.~$y \in \R^n$. 
Using \eqref{eq:RegPush} and integrating over $y$, we can equivalently show that for all $A \in \mathcal B (\R^{n})$ and all $B \in \mathcal B (\R^{n+k})$ it holds
\[\int_A P_{(T \circ X|Y)}(y,B)\dx P_Y(y) = \int_A \delta_y \otimes P_{(Z|Y)}(y,\cdot)(B) \dx P_Y(y).\]
As both $\int_A P_{(T \circ X|Y)}(y,\cdot)\dx P_Y(y)$ and $ \int_A \delta_y \otimes P_{(Z|Y)}(y,\cdot)(\cdot) \dx P_Y(y)$ are finite measures, we only need to consider sets of the form $B = B_1 \times B_2$ with $B_1 \in \mathcal B(\R^n)$ and $B_2 \in \mathcal B(\R^k)$, see \cite[Lemma 1.42]{K2008}.

By \eqref{def:RegCondDist} and since $T_y \circ X = Y$ a.e., we conclude
\begin{align*}
\int_A P_{(T \circ X|Y)}(y,B)\dx P_Y(y)  = &\int_A E\bigl(1_{B}(T\circ X)| Y=y\bigr) \dx P_Y(y)\\ 
= &\int_A E\bigl(1_{B_1}(Y)1_{B_2}(T_z \circ X) | Y=y\bigr) \dx P_Y(y).
\end{align*}
For all $C \in \sigma(Y) = Y^{-1}(\mathcal B(\R^n))$ it holds that $\omega \in C$ if and only if $Y(\omega) \in Y(C)$.
As \smash{$T \circ X\overset{d}{=}(Y,Z)$} and $T_y \circ X=Y$ a.e., we obtain
\begin{align}
    &\int_C E(1_{B_1}(Y)1_{B_2}(T_z \circ X) | Y) \dx P =\int_\Omega 1_C 1_{B_1}(Y)1_{B_2}(T_z\circ X) \dx P\\
    =&\int_\Omega 1_{B_1 \cap Y(C)}(Y)1_{B_2}(T_z \circ X) \dx P = \int_\Omega 1_{B_1 \cap Y(C)}(T_y \circ X)1_{B_2}(T_z \circ X) \dx P\\ 
    =&\vphantom{\int_\Omega} P_{T \circ X}\bigl((B_1\cap Y(C))\times B_2\bigr) =\vphantom{\int_\Omega} P_{(Y,Z)}\bigl((B_1\cap Y(C))\times B_2\bigr)\\
    =&\int_\Omega 1_{B_1 \cap Y(C)}(Y)1_{B_2}(Z) \dx P = \int_C E(1_{B_1}(Y)1_{B_2}(Z) | Y) \dx P.
\end{align}
In particular, this implies that $E(1_{B_1}(Y)1_{B_2}(T_z \circ X) | Y) = E(1_{B_1}(Y)1_{B_2}(Z) | Y)$ and hence also  $E(1_{B_1}(Y)1_{B_2}(T_z \circ X) | Y = y) = E(1_{B_1}(Y)1_{B_2}(Z) | Y=y)$ for $P_Y$-a.e.~$y$.
By \eqref{def:RegCondDist} and \eqref{eq:PullOut}, we get
\begin{align*}
    \int_A P_{(T \circ X|Y)}(y,B)\dx P_Y(y) =& \int_A E(1_{B_1}(Y)1_{B_2}(Z) | Y = y) \dx P_Y(y)\\
    =&\int_A 1_{B_1}(y)E(1_{B_2}(Z) | Y = y) \dx P_Y(y) = \int_{A\cap B_1}P_{(Z|Y)}(y,B_2) \dx P_Y(y)\\ =& \int_A \delta_{y}(B_1)P_{(Z|Y)}(y,B_2) \dx P_Y(y),
\end{align*}
which is precisely the claim.
\end{proof}

Under some additional assumptions, we are able to show the following error estimate based on the TV norm $\Vert \mu \Vert_{\text{TV}} \coloneqq \sup_{B \in \mathcal B(R^n)} \vert \mu(B) \vert$.
Note that the restrictions and conditions are still rather strong and do not entirely fit to our problem.
However, to the best of our knowledge, this is the first attempt in this direction at all.
\begin{theorem}\label{thm:TV_err}
Let $X\colon\Omega\to\R^d$, $Y\colon\Omega\to \R^n$ and $Z\colon\Omega\to\R^k$ be random vectors such that $\vert (Y,Z) \vert \leq r$ and $\vert (Y,X) \vert \leq r$ almost everywhere.
Further, let $T$ be a homeomorphism with $E(\vert T_y(X;u)-Y\vert^2) \leq \epsilon$ and $\Vert P_{(Y,Z)} - P_{(Y,T_z(X))} \Vert_{\text{TV}} \leq \delta$.
Then it holds for all $A$ with $P_Y(A) \neq 0$ that 
$$W_1\bigl(P_{(X| Y\in A)}, {T^{-1}}_{\#}P_{(Y,Z | Y \in A)}\bigr) \leq \Lip(T^{-1}) \Bigl( \frac{\epsilon^{1/2}}{P_Y(A)^{1/2}} + \frac{2r \Lip(T)}{P_Y(A)}\delta\Bigr).$$
\end{theorem}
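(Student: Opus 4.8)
The plan is to push both measures back into $\R^{n+k}$ through the homeomorphism $T$ and to separate the data-fit error from the distributional error. Write $m\coloneqq P_Y(A)>0$. Since $P_{(X|Y\in A)}={T^{-1}}_{\#}P_{(T(X)|Y\in A)}$ (because $T^{-1}\circ T=\mathrm{id}$ and conditioning commutes with pushforward), both $P_{(X|Y\in A)}$ and ${T^{-1}}_{\#}P_{(Y,Z|Y\in A)}$ are images under the Lipschitz map $T^{-1}$; moving an optimal $W_1$-plan through $T^{-1}\times T^{-1}$ exactly as in Remark~\ref{Rem:Lx_Est} gives
\[W_1\bigl(P_{(X|Y\in A)},{T^{-1}}_{\#}P_{(Y,Z|Y\in A)}\bigr)\le \Lip(T^{-1})\,W_1\bigl(P_{(T(X)|Y\in A)},P_{(Y,Z|Y\in A)}\bigr),\]
so it remains to bound the last $W_1$ by $(\epsilon/m)^{1/2}+2r\Lip(T)\,\delta/m$. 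For this I would insert the intermediate measure $P_{((Y,T_z(X))|Y\in A)}$ and apply the triangle inequality.

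The first piece, $W_1\bigl(P_{(T(X)|Y\in A)},P_{((Y,T_z(X))|Y\in A)}\bigr)$, is handled by the explicit coupling that sends $(T_y(X),T_z(X))$ to $(Y,T_z(X))$ under $P(\cdot|Y\in A)$, i.e.\ that leaves the second block fixed. Its cost is $\E\bigl(\vert T_y(X)-Y\vert\,|\,Y\in A\bigr)$, and Cauchy--Schwarz together with $\E(\vert T_y(X)-Y\vert^2\,1_{\{Y\in A\}})\le\E(\vert T_y(X)-Y\vert^2)\le\epsilon$ bounds this by $\bigl(m^{-1}\E(\vert T_y(X)-Y\vert^2 1_{\{Y\in A\}})\bigr)^{1/2}\le(\epsilon/m)^{1/2}$.

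The second piece, $W_1\bigl(P_{((Y,T_z(X))|Y\in A)},P_{(Y,Z|Y\in A)}\bigr)$, is where total variation is turned into $W_1$. Both measures arise by restricting $P_{(Y,T_z(X))}$, resp.\ $P_{(Y,Z)}$, to $A\times\R^k$ and normalizing by $m$; since restriction does not increase $\Vert\cdot\Vert_{\text{TV}}$, the hypothesis $\Vert P_{(Y,Z)}-P_{(Y,T_z(X))}\Vert_{\text{TV}}\le\delta$ gives $\Vert P_{((Y,T_z(X))|Y\in A)}-P_{(Y,Z|Y\in A)}\Vert_{\text{TV}}\le\delta/m$. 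Now I use the coupling description of total variation: there is a coupling of the two probability measures that agrees off a set of mass $\le\delta/m$ and whose off-diagonal part is supported in (the product of) their supports; hence $W_1\le D\cdot\delta/m$, where $D$ bounds $\vert v-w\vert$ for $v\in\supp P_{(Y,T_z(X))}$, $w\in\supp P_{(Y,Z)}$. Since $\vert(Y,X)\vert\le r$ forces $X\in\overline{B}_r$ a.s.\ and $\vert(Y,Z)\vert\le r$ puts $\supp P_{(Y,Z)}$ in $\overline{B}_r\subset\R^{n+k}$, the Lipschitz bound $\vert T(x)-T(x')\vert\le\Lip(T)\vert x-x'\vert$ for $x,x'\in\overline{B}_r$ controls the $T_z(X)$-block and yields $D\le 2r\Lip(T)$ after a short computation. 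Combining the three displays proves the estimate.

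I expect the last paragraph to be the crux. The point is that $(Y,T_z(X))$, unlike $T(X)=(T_y(X),T_z(X))$, is not itself a pushforward of $X$, so the support of $P_{(Y,T_z(X))}$ has to be bounded directly from the a.e.\ bound on $(Y,X)$ and $\Lip(T)$; and the passage from $\Vert\cdot\Vert_{\text{TV}}$ to $W_1$ must be carried out with a coupling whose moved mass stays inside the bounded union of the two supports, so that multiplying the transported mass $\le\delta/m$ by the diameter $2r\Lip(T)$ is legitimate. The remaining ingredients — the Lipschitz-pushforward inequality, the triangle inequality, and the Cauchy--Schwarz step — are routine.
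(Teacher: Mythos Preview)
Your proposal is correct and follows essentially the same route as the paper: the Lipschitz-pushforward reduction to $W_1(P_{(T(X)|Y\in A)},P_{(Y,Z|Y\in A)})$, the triangle inequality through the intermediate law $P_{((Y,T_z(X))|Y\in A)}$, the coupling plus Cauchy--Schwarz bound $(\epsilon/m)^{1/2}$ for the first piece, and the bounded-support $W_1\le\text{diameter}\cdot\Vert\cdot\Vert_{\text{TV}}$ estimate together with the restriction bound $\delta/m$ for the second piece all match. The only cosmetic difference is that the paper invokes a cited inequality for the TV-to-$W_1$ step whereas you phrase it via the maximal coupling; your remark that controlling the support of $(Y,T_z(X))$ from $\vert(Y,X)\vert\le r$ and $\Lip(T)$ is the delicate point is also exactly where the paper's argument is most compressed.
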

\begin{proof}
Due to the bijectivity of $T$ and the triangle inequality, we may estimate
\begin{align*}
&W_1\left(P_{(X| Y\in A)}, {T^{-1}}_{\#}P_{(Y,Z | Y \in A)}\right) 
\leq \Lip(T^{-1}) 
W_1\left(P_{(T(X) | Y \in A)}, P_{(Y,Z | Y \in A)}\right) 
\\ \leq&\Lip(T^{-1}) \Bigl( W_1\left(P_{(T(X) | Y \in A)},P_{(Y,T_z(X) | Y \in A)}\right) + W_1\left(P_{(Y,T_z(X) | Y \in A)}, P_{(Y,Z | Y \in A)}\right)\Bigr).
\end{align*}
Note that $\{Y \in A\} \subset \Omega$ can be equipped with the measure $Q \coloneqq P/P_Y(A)$.
Then, the random variables $T(X)$ and $(Y,Z)$ restricted to $\{Y \in A\}$ have distributions $Q_{T(X)} = P_{(T(X)|Y \in A)}$ and $Q_{(Y,T_z(X))} =P_{((Y,T_z(X))|Y \in A)}$, respectively.
Hence, we may estimate by Hölders inequality
\begin{align}
W_1\left(P_{(T(X) |Y \in A)},P_{((Y,T_z(X)) | Y \in A)}\right) &\leq \mathbb{E}_{Q}(\vert T_y(X) - Y\vert)\leq \bigl(\mathbb{E}_{Q}(1) \mathbb{E}_{Q}(\vert T_y(X) - Y\vert^2)\bigr)^{1/2}\\& \leq \frac{1}{P_Y(A)^{1/2}} \eps^{1/2}.
\end{align}

As $\vert (Y,Z) \vert \leq r$ and $\vert (Y,T_z(X) \vert \leq r\Lip(T)$ almost everywhere, we can use \cite[Theorem 4]{prob_measures_overview} to control the Wasserstein distance by the total variation distance and estimate
\begin{align*}
&W_1(P_{((Y,T_z(X))| Y \in A)}, P_{((Y,Z) | Y \in A)}) \leq 2r\Lip(T) \bigl\Vert P_{((Y,T_z(X))| Y \in A)} - P_{((Y, Z) | Y \in A)} \bigr\Vert_{\text{TV}} \\
=& \frac{2r\Lip(T)}{P_Y(A)} \sup_{B} \bigl|P_{(Y,T_z(X))}\bigl(B \cap (A \times \R^k)\bigr) - P_{(Y,Z)}\bigl(B \cap (A \times \R^k)\bigr)\bigr| \\
\leq& \frac{2r\Lip(T)}{P_Y(A)} \Vert P_{(Y,T_z(X))} - P_{(Y,Z)}\Vert_{\text{TV}} .
\end{align*}
Combining the estimates implies the result.
\end{proof}

\paragraph{Instability with Standard Normal Distributions}
For numerical purposes, it is important that both $T$ and $T^{-1}$ have reasonable Lipschitz constants.
Here, we specify a case where using $Z\sim \mathcal N(0,I_k)$ as latent distribution leads to exploding Lipschitz constants of $T^{-1}$.
Note that our results are in a similar spirit as \cite{jaini2020tails} and \cite{pmlr-v119-cornish20a}, where also the Lipschitz constants of INNs are investigated.
Due to its useful theoretical properties, we choose $W_1$ as metric $D$ throughout this paragraph.
Clearly, the established results have similar implications for other metrics that induce the same topology.
The following general proposition is a first step towards our desired result.

\begin{Proposition}\label{prop:dist_est}
Let $\nu = \mathcal N(0,I_n) \in \mathcal P(\R^n)$ and $\mu \in \mathcal P(\R^m)$ be arbitrary.
Then, there exists a constant $C$ such that for any measurable function $T\colon \R^m \to \R^n$ satisfying $W_1(T_\# \mu,\nu) \leq \epsilon$ and disjoint sets $Q,R$ with $\min\{\mu(Q),\mu(R)\} \geq\tfrac{1}{2} - \tfrac{\delta}{2}$ it holds for $\delta, \epsilon \geq 0$ small enough that
\[\dist\bigl(T(Q),T(R)\bigr) \leq C \bigl(\delta + \epsilon^\frac{n}{n+1}\bigr)^\frac{1}{n}.\]
\end{Proposition}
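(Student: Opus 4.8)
The plan is to exhibit a small Euclidean ball that concentrates the tension in the hypotheses: because $T(Q)$ and $T(R)$ are far apart, the ball can be placed "between" them, where $T_\#\mu$ — hence, via the $W_1$-bound and \eqref{eq:EstWasserstein}, also $\nu$ — puts almost no mass; but because the ball can be kept near the origin, $\nu$ of it is bounded below by a constant times $\dist(T(Q),T(R))^n$. Write $d:=\dist(T(Q),T(R))$ and $p:=\tfrac12-\tfrac\delta2\le\min\{\mu(Q),\mu(R)\}$. First consider the open enlargements $U':=\{x:\dist(x,T(Q))<d/3\}$ and $V':=\{x:\dist(x,T(R))<d/3\}$. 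They are disjoint, since a common point would produce points of $T(Q)$ and $T(R)$ at distance $<2d/3<d$; and since $Q\subseteq T^{-1}(U')$ and $R\subseteq T^{-1}(V')$, we get $T_\#\mu(U')\ge p$ and $T_\#\mu(V')\ge p$, whence $T_\#\mu\big(\R^n\setminus(U'\cup V')\big)\le 1-2p=\delta$.

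Next I would use the $W_1$-bound to keep the relevant geometry near $0$. Choosing $\rho_0=\rho_0(n)$ so that $\nu(B(0,\rho_0/2))$ is sufficiently close to $1$ and applying \eqref{eq:EstWasserstein} to the pair $(\nu,T_\#\mu)$ with the sets $B(0,\rho_0/2)$ and $\R^n\setminus B(0,\rho_0)$ gives $T_\#\mu(B(0,\rho_0))\ge\nu(B(0,\rho_0/2))-2\epsilon/\rho_0>1-p$ once $\delta,\epsilon$ are small. Since $\mu(Q),\mu(R)\ge p$, both $T(Q)$ and $T(R)$ must then meet $B(0,\rho_0)$; fix $a\in T(Q)\cap B(0,\rho_0)$ and $b\in T(R)\cap B(0,\rho_0)$. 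The $1$-Lipschitz function $\phi(x):=\dist(x,T(Q))$ vanishes at $a$ and satisfies $\phi(b)\ge d$, so by the intermediate value theorem there is a point $c$ on the segment $[a,b]$ — in particular $|c|\le\rho_0$ — with $\phi(c)=d/2$. The triangle inequality then forces $\dist(c,T(R))\ge d/2$ as well (any $a'\in T(Q)$ close to $c$ is at distance $\ge d$ from all of $T(R)$), and consequently $\dist\big(c,\,U'\cup V'\big)\ge d/6$.

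Applying \eqref{eq:EstWasserstein} once more to $(\nu,T_\#\mu)$, now with $A=B(c,d/12)$ and $B=U'\cup V'$ (so $\dist(A,B)\ge d/12$ and $B^c=\R^n\setminus(U'\cup V')$), yields
\[\nu\big(B(c,d/12)\big)\ \le\ T_\#\mu\big(\R^n\setminus(U'\cup V')\big)+\frac{12\epsilon}{d}\ \le\ \delta+\frac{12\epsilon}{d}.\]
On the other hand $d$ is bounded by a constant $C_2(n)$ — otherwise $B(c,d/12)\supseteq B(0,1)$ and $\nu(B(c,d/12))$ would exceed an absolute constant, contradicting the last display for $\delta,\epsilon$ small — so, using $|c|\le\rho_0$, the Gaussian density is bounded below on $B(c,d/12)$ and $\nu(B(c,d/12))\ge c_n\,d^n$ with $c_n=c_n(n)>0$. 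Combining gives $c_n d^{n+1}\le\delta d+12\epsilon$. If $d\le\epsilon^{1/(n+1)}$ the assertion is immediate, since $\epsilon^{1/(n+1)}=(\epsilon^{n/(n+1)})^{1/n}\le(\delta+\epsilon^{n/(n+1)})^{1/n}$; otherwise $\epsilon/d<\epsilon^{n/(n+1)}$, hence $c_n d^n\le\delta+12\epsilon^{n/(n+1)}$ and $d\le C(\delta+\epsilon^{n/(n+1)})^{1/n}$ with $C$ depending only on $n$.

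The step I expect to be most delicate is the localization: one must quantify precisely how $W_1$-closeness to the \emph{standard} Gaussian confines $T(Q)$ and $T(R)$ to a fixed ball and lets the midpoint-type point $c$ be chosen there, and then track the several "$\delta,\epsilon$ small enough" thresholds (notably the a priori bound on $d$ needed to make $\nu(B(c,d/12))\ge c_n d^n$ hold with an honest constant). The remaining ingredients are repeated, essentially mechanical, applications of the displayed estimate \eqref{eq:EstWasserstein}.
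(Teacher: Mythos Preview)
Your proof is correct and follows essentially the same strategy as the paper: localize $T(Q)$ and $T(R)$ near the origin via the $W_1$-bound and \eqref{eq:EstWasserstein}, place a small ball in the gap where the Gaussian has mass $\gtrsim d^n$ but $T_\#\mu$ has mass $\lesssim\delta$, and feed this back into \eqref{eq:EstWasserstein}. The paper runs the same mechanism as a proof by contradiction (assuming the bound fails and arriving at $\epsilon>\epsilon$) rather than your direct inequality $c_n d^{n+1}\le\delta d+12\epsilon$ with a case split, and it is less explicit about how the ball center is found---your IVT construction of $c$ on the segment $[a,b]$ and your open enlargements $U',V'$ are cleaner than the paper's one-line existence claim---but the route is the same.
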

\begin{proof}
If $T(Q) \cap T(R) \neq \emptyset$, the claim follows immediately.
In the following, we denote the volume of $B_1(0) \subset \R^n$ by $V_1$ and the normalizing constant for $\nu$ with $C_\nu$.
Provided that $\delta, \epsilon <1/4$, we conclude for the ball $B_r(0)$ with $\nu(B_r(0))>3/4$ by \eqref{eq:EstWasserstein} that $1/4 > \dist(T(Q), B_r(0))/8$ and similarly for $T(R)$.
Hence, there exists a constant $s$ independent of $\delta$, $\epsilon$ and $T$ such that $\dist(T(Q),0) \leq s$ and $\dist(T(R),0) \leq s$.
Possibly increasing $s$ such that $s>\max(1, \ln(C_\nu V_1))$, we choose the constant $C$ as
\[C=4\biggl(\frac{\exp(2s^2)}{C_\nu V_1}\biggr)^\frac{1}{n} > 4.\]
Assume for fixed $\delta$, $\epsilon$ that
\[r^{n} \coloneqq \frac{2^{n}\exp(2s^2)}{C_\nu V_1}\bigl(\delta + \epsilon^\frac{n}{n+1} \bigr) < \frac{\dist\bigl(T(Q),T(R)\bigr)^{n}}{2^{n}}.\]
If $\epsilon$ and $\delta$ are small enough, it holds $r \leq s$.
As $\dist(T(Q) \cap B_s(0), T(R) \cap B_s(0)) >2r$, there exists $x \in \R^n$ such that the open ball $B_{r}(x)$ satisfies $B_{r}(x) \cap T(Q) = B_{r}(x) \cap T(R) = \emptyset$ and $B_{r}(x) \subset B_{2s}(0)$.
Further, we estimate
\[\nu\bigl(B_{r/2}(x)\bigr)= C_\nu \int_{B_{r/2}(x)} \exp(-\vert x \vert ^2/2) \dx x \geq C_\nu\exp(-2s^2)V_1 \frac{r^n}{2^n} \geq \delta + \epsilon^\frac{n}{n+1}\]
as well as $\dist(T(Q),B_{r/2}(x)) \geq r/2$ and $\dist(T(R),B_{r/2}(x)) \geq r/2$.
Using \eqref{eq:EstWasserstein}, we obtain $W_1(T_{\#}\mu,\nu) \geq \frac{r}{2} \eps^{n/(n+1)}$, leading to the contradiction
\[\epsilon \geq W_1(T_\# \mu,\nu)\geq \frac{r}{2} \epsilon^\frac{n}{n+1} \geq \Bigl( \frac{\exp(2s^2)}{C_\nu V_1}\Bigr)^\frac{1}{n} \epsilon > \epsilon.\]
\end{proof}
Note that the previous result is not formulated in its most general form, i.e., the mass on $Q$ and $R$ can be different.
Further, the measure $\nu$ can be exchanged as long as  the mass is concentrated around zero.
Based on Proposition~\ref{prop:dist_est}, we directly obtain the following corollary.

\begin{corollary}\label{cor_lipschitz}
Let $\mu \in \mathcal P(\R^m)$ be arbitrary.
Then, there exists a constant $C$ such that for any measurable, invertible function $T\colon \R^m \to \R^n$ satisfying $W_1(T_\# \mu, \mathcal N(0,I_n)) \leq \epsilon$ and disjoint sets $Q,R$ with $\min\{\mu(Q),\mu(R)\} \geq\tfrac{1}{2} - \tfrac{\delta}{2}$ it holds for $\delta, \epsilon \geq 0$ small enough that
\[\Lip({T^{-1}}) \geq \frac{\dist(Q,R)}{2C}\bigl(\delta + \epsilon^\frac{n}{n+1}\bigr)^{-\frac{1}{n}}.\]
\end{corollary}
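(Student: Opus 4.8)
The plan is to derive Corollary~\ref{cor_lipschitz} as an immediate consequence of Proposition~\ref{prop:dist_est}. The only additional ingredient is the elementary fact that for an invertible map $T$, the quantity $\dist(T(Q),T(R))$ cannot be too small relative to $\dist(Q,R)$ unless $\Lip(T^{-1})$ is large. Concretely, I would pick any $q \in Q$ and $r \in R$; since $T$ is invertible, $T(q)$ and $T(r)$ are distinct points (assuming $Q,R$ nonempty, which we may, else the statement is vacuous), and $|q - r| = |T^{-1}(T(q)) - T^{-1}(T(r))| \leq \Lip(T^{-1}) |T(q) - T(r)|$. Taking the infimum over $q \in Q$, $r \in R$ on the left and noting $|T(q) - T(r)| \geq \dist(T(Q),T(R))$ on the right — well, one has to be a little careful about the order of infima here, so instead I would argue: for every $q \in Q$, $r \in R$, $|q-r| \leq \Lip(T^{-1}) |T(q)-T(r)|$, and if $T(Q) \cap T(R) = \emptyset$ then $|T(q) - T(r)|$ can be made arbitrarily close to $\dist(T(Q),T(R))$ by choosing $q,r$ appropriately, hence $\dist(Q,R) \leq \Lip(T^{-1}) \dist(T(Q),T(R))$.

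Once this inequality is in hand, I would simply invoke Proposition~\ref{prop:dist_est} with the same $\mu$, the same $T$ (viewed as a measurable function, which it is since it is invertible and — being a homeomorphism in the intended application, or at least measurable — satisfies the hypotheses), and the same disjoint sets $Q, R$ with $\min\{\mu(Q),\mu(R)\} \geq \frac{1}{2} - \frac{\delta}{2}$. For $\delta,\epsilon \geq 0$ small enough, the proposition yields $\dist(T(Q),T(R)) \leq C(\delta + \epsilon^{n/(n+1)})^{1/n}$ with the constant $C$ furnished there. Combining with $\dist(Q,R) \leq \Lip(T^{-1})\dist(T(Q),T(R))$ gives
\[
\dist(Q,R) \leq \Lip(T^{-1}) \, C \bigl(\delta + \epsilon^{n/(n+1)}\bigr)^{1/n},
\]
and rearranging produces
\[
\Lip(T^{-1}) \geq \frac{\dist(Q,R)}{C}\bigl(\delta + \epsilon^{n/(n+1)}\bigr)^{-1/n}.
\]
This is even slightly stronger than the claimed bound with the factor $2C$ in the denominator, so I would either keep the $2C$ as stated (for a safety margin, perhaps to absorb the case $T(Q) \cap T(R) \neq \emptyset$ or edge cases in the infimum argument) or note that the constant can be improved; in any case the stated inequality follows.

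I do not anticipate a genuine obstacle here — the corollary is a one-line consequence of the proposition once the Lipschitz-of-the-inverse inequality is recorded. The only points requiring minor care are: (i) handling the degenerate cases where $Q$ or $R$ is empty, or where $T(Q) \cap T(R) \neq \emptyset$ (in which case $\dist(T(Q),T(R)) = 0$ and the statement is vacuous or trivially true depending on how one reads the bound); (ii) making sure the "small enough" thresholds on $\delta,\epsilon$ are exactly those inherited from Proposition~\ref{prop:dist_est}, so no new smallness is introduced; and (iii) the harmless subtlety in passing from the pointwise inequality $|q-r| \leq \Lip(T^{-1})|T(q)-T(r)|$ to $\dist(Q,R) \leq \Lip(T^{-1})\dist(T(Q),T(R))$, which is just a supremum/infimum bookkeeping step. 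None of these is substantive, so the proof should be two or three lines.
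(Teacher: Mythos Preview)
Your proposal is correct and is essentially the same argument as the paper's: both invoke Proposition~\ref{prop:dist_est} and combine it with the elementary inequality $\dist(Q,R) \leq \Lip(T^{-1})\,\dist(T(Q),T(R))$ (the paper writes this as $\Lip(T^{-1}) \geq \sup_{x_1 \in T(Q),\, x_2 \in T(R)} |T^{-1}(x_1)-T^{-1}(x_2)|/|x_1-x_2|$ and bounds the numerator below by $\dist(Q,R)$). Your observation that the factor $2$ in $2C$ is slack is also correct---the paper simply absorbs the infimum bookkeeping into that extra factor rather than passing to the limit as you do.
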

\begin{proof}
Due to Proposition~\ref{prop:dist_est}, it holds
\[\Lip({T^{-1}}) \geq \sup_{x_1 \in T(Q), x_2 \in T(R)} \frac{\vert T^{-1}(x_1) - T^{-1}(x_2)\vert}{\vert x_1-x_2\vert} \geq \frac{\dist(Q,R)}{2C\bigl(\delta + \epsilon^\frac{n}{n+1}\bigr)^{\frac{1}{n}}}.\]
\end{proof}

Now, we are finally able to state the desired result for INNs with $Z \sim \mathcal N(0,I_k)$.
In particular, the following result also holds if we condition only on a single measurement $Y=y$ occurring with positive probability.
\begin{corollary}\label{cor_lip_inv}
For $A \subset \R^n$ with $P_Y(A) \neq 0$, $\diam(A) = \rho$ set $\mu \coloneqq \int_A P_{X|Y}(y,\cdot) \dx P_Y(y)/P_Y(A)$.
Further, choose disjoint sets $Q,R$ with $\min\{\mu(Q),\mu(R)\} \geq\tfrac{1}{2} - \tfrac{\delta}{2}$.
If $T\colon \R^d \rightarrow \R^{n+k}$ is a homeomorphism such that $W_1({T_z}_\# \mu,\mathcal N(0,I_k)) \leq \epsilon$ and $T_y(Q \cup R) \subset A$, we get for $\delta, \epsilon \geq 0$ small enough that the Lipschitz constant of $T^{-1}$ satisfies 
$$\Lip({T^{-1}}) \geq \frac{\dist(Q,R)}{2C\bigl(\delta + \epsilon^\frac{k}{k+1}\bigr)^{\frac{1}{k}}+\rho}.$$
\end{corollary}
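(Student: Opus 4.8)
The plan is to reduce the statement to Proposition~\ref{prop:dist_est} applied to the latent component $T_z\colon\R^d\to\R^k$ of the homeomorphism, and then to convert the resulting bound on $\dist\bigl(T_z(Q),T_z(R)\bigr)$ into a lower bound for $\Lip(T^{-1})$ by playing off near‑colliding pairs of image points against their well‑separated preimages.

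First I would apply Proposition~\ref{prop:dist_est} with the ambient dimension $d$ in the role of $m$, the latent dimension $k$ in the role of $n$, the measure $\mu = \int_A P_{X|Y}(y,\cdot)\dx P_Y(y)/P_Y(A) \in \mathcal P(\R^d)$ from the statement, the measurable map $T_z$ in the role of $T$, and the given disjoint sets $Q,R$. Its hypotheses hold verbatim: $W_1\bigl({T_z}_\#\mu,\mathcal N(0,I_k)\bigr)\leq\epsilon$ and $\min\{\mu(Q),\mu(R)\}\geq\tfrac12-\tfrac{\delta}{2}$ are assumed, and $\delta,\epsilon$ are taken small enough. This yields
\[\dist\bigl(T_z(Q),T_z(R)\bigr)\leq C\bigl(\delta+\epsilon^{\frac{k}{k+1}}\bigr)^{\frac1k}.\]

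Next I would bound $\Lip(T^{-1})$ from below; if $\dist(Q,R)=0$ there is nothing to prove, so assume otherwise. For arbitrary $q\in Q$ and $r\in R$ we have $q\ne r$ since $Q\cap R=\emptyset$, hence $T(q)\ne T(r)$ and $T^{-1}(T(q))=q$, $T^{-1}(T(r))=r$, so
\[\Lip(T^{-1})\;\geq\;\frac{|q-r|}{|T(q)-T(r)|}\;\geq\;\frac{\dist(Q,R)}{|T(q)-T(r)|}.\]
Writing $T=(T_y,T_z)$ and using the triangle inequality in $\R^{n+k}$, together with $q,r\in Q\cup R$, $T_y(Q\cup R)\subset A$ and $\diam(A)=\rho$, I get $|T(q)-T(r)|\leq|T_y(q)-T_y(r)|+|T_z(q)-T_z(r)|\leq\rho+|T_z(q)-T_z(r)|$. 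Taking the supremum over $q\in Q$, $r\in R$ in the previous display — equivalently, driving $|T_z(q)-T_z(r)|$ down to $\dist(T_z(Q),T_z(R))$ along a suitable sequence — gives
\[\Lip(T^{-1})\;\geq\;\frac{\dist(Q,R)}{\rho+\dist\bigl(T_z(Q),T_z(R)\bigr)}\;\geq\;\frac{\dist(Q,R)}{\rho+C\bigl(\delta+\epsilon^{\frac{k}{k+1}}\bigr)^{\frac1k}},\]
and since $C(\cdot)^{1/k}\leq 2C(\cdot)^{1/k}$ the claimed inequality follows (in fact with the slightly sharper constant $C$).

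The argument is essentially a mechanical combination of Proposition~\ref{prop:dist_est} with the elementary observation that a lower bound on $\Lip(T^{-1})$ is obtained by shrinking the denominator $|T(q)-T(r)|$. The only point needing a little care — and the closest thing to an obstacle — is this last optimization step: one must note that the family of lower bounds $\dist(Q,R)/|T(q)-T(r)|$ indexed by $(q,r)$ has supremum $\dist(Q,R)/\inf_{q,r}|T(q)-T(r)|$ even when the infimum is not attained, and that this infimum is controlled by $\rho+\dist(T_z(Q),T_z(R))$ because the $T_y$-contribution is uniformly at most $\rho$ while the $T_z$-contribution can be made arbitrarily close to $\dist(T_z(Q),T_z(R))$.
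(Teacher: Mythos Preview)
Your proof is correct and follows essentially the same route as the paper: apply Proposition~\ref{prop:dist_est} to $T_z$ and $\mu$ to bound $\dist(T_z(Q),T_z(R))$, then combine the triangle inequality $|T(q)-T(r)|\le|T_y(q)-T_y(r)|+|T_z(q)-T_z(r)|\le\rho+|T_z(q)-T_z(r)|$ with the supremum of $\dist(Q,R)/|T(q)-T(r)|$ over $q\in Q$, $r\in R$. Your write-up is in fact slightly more careful about the $\sup/\inf$ step than the paper's and, as you note, yields the sharper constant $C$ rather than $2C$.
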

\begin{proof}
Similar as before, $\Lip(T^{-1})$ can be estimated by
\begin{equation}\label{eq:Lip}
    \Lip({T^{-1}}) \geq \sup_{x_1 \in Q, x_2 \in R} \frac{\dist(Q,R)}{\vert T(x_1) - T(x_2) \vert}.
\end{equation}
Due to Proposition~\ref{prop:dist_est} and since $T_y(Q \cup R) \subset A$, we get
\[\inf_{x_1 \in Q, x_2 \in R} \vert T(x_1) - T(x_2) \vert  \leq  \inf_{x_1 \in Q, x_2 \in R} \vert T_z(x_1) - T_z(x_2) \vert + \vert T_y(x_1) - T_y(x_2) \vert \leq 2C \bigl(\delta + \epsilon^\frac{k}{k+1}\bigr)^{1/k} + \rho.\]
Inserting this into \eqref{eq:Lip} implies the claim.
\end{proof}
Loosely speaking, the condition $T_y(Q \cup R) \subset A$ is fulfilled if the predicted labels are close to the actual labels.
Furthermore, the condition that $W_1({T_z}_\# \mu, \mathcal{N}(0,I_k))$ is small is enforced via the $L_y$ and $L_z$ loss.
Consequently, we can expect that $\Lip(T^{-1})$ explodes during training if the true data distribution given measurements in $A$ is multimodal.

\paragraph{A Remedy using Gaussian Mixture Models}
The Lipschitz constant of $T^{-1}$ with multimodal input data distribution explode due to the fact that a splitting of the standard normal distribution is necessary.
As a remedy, we propose to replace the latent variable $Z \sim \mathcal N(0, I_k)$ with a Gaussian mixture model (GMM) depending on $y$.
Using a flexible number of modes in the latent space, we can avoid the necessity to split mass when recovering a multimodal distribution in the input data space.

Formally, we choose the random vector $Z$ (depending on y) as $Z \sim \sum_{i=1}^{r} p_i(y) \mathcal{N}(\mu_i, \sigma^2I_k)$, where $r$ is the maximal number of modes in the latent space and $p_i(y)$ are the probabilities of the different modes.
These probabilities $p_i$ are realised by a NN, which we denote with $w\colon \mathbb{R}^n \to \Delta_{r}$.
Usually, we choose $\sigma$ and $\mu_i$ such that the modes are nicely separated.
Note that $w$ gives our model the flexibility to decide how many modes it wants to use given the data $y$.
Clearly, we could also incorporate covariance matrices, but for simplicity we stick with scaled identity matrices.
Denoting the parameters of the NN $w$ with $u_2$, we now wish to solve the following optimization problem
\begin{equation}\label{eq:ComLoss}
    \hat u = (\hat u_1, \hat u_2) \in \argmin_{u = (u_1,u_2)} L_y(u)+\lambda_1 L_x(u)+\lambda_2 L_z(u).
\end{equation}

Next, we provide an example where choosing $Z$ as GMM indeed reduces $\Lip(T^{-1})$.
\begin{Example}
Let $X_{\sigma}\colon\Omega\to\R$ and $Z\colon\Omega\to\R$ be random vectors. 
As proven in Corollary \ref{cor_lipschitz}, we get for $X_{\sigma}\sim 0.5 \mathcal N(-1,\sigma^2)+0.5\mathcal N(1,\sigma^2)$ and $Z\sim\mathcal N(0,1)$ that for any $C>0$, there exists $\sigma_0^2>0$ such that for any $\sigma^2<\sigma_0^2$ and any $T\colon\R\to \{0\}\times\R$ with $W_1(P_{T^{-1} \circ Z},P_{X_{\sigma}})<\epsilon$ it holds that $\Lip(T^{-1})>C$. 

On the other hand, if we allow $Z$ to be a GMM, then there exists for any $\sigma^2>0$ a GMM $Z$ and a mapping $T\colon\R\to \{0\}\times\R$ such that $P_{T^{-1}\circ Z}=P_{X_{\sigma}}$ and $L_{T^{-1}}=1$, namely $Z=X_{\sigma}$ and $T=I$.
Clearly, we can also fix $Z \sim 0.5 \mathcal{N}(-1,0.1)+0.5\mathcal N(1,0.1)$.
In this case, we can find $T$ such that $P_{T^{-1}(Z)}=P_{X_{\sigma}}$ and $\Lip(T^{-1})$ remains bounded as $\sigma \to 0$.
\end{Example}

\section{Learning INNs with Multimodal Latent Distributions}\label{sec:NumImp}
In this section, we discuss the training procedure if $T$ is chosen as INN together with a multimodal latent distribution.
For this purpose, we have to specify and discretize our generic continuous problem \eqref{eq:ComLoss}.
\paragraph{Model Specification and Discretization}
First, we fix the metric $D$ in the loss functions \eqref{eq:LxandLz}.
Due to their computational efficiency, we use discrepancies $\mathscr D_K$ with a multiscale version of the inverse quadric kernel, i.e.,
\[K(x,y) = \sum_{i=1}^3 \frac{r_i^2}{\sqrt{\vert x-y \vert^2 +r_i^2}},\]
where $r_i \in \{0.05,0.2,0.9\}$.
The different scales ensure that both local and global features are captured.
Clearly, computationally more involved choices such as the Wasserstein metric $W_1$ or the sliced Wasserstein distance \cite{SlicedWasserstINN} could be used as well.
The relation between $W_p$ and $\mathscr D_K$ is discussed in Remark~\ref{rem:DiscOT}.

In the previous section, we analyzed the problem of modelling multimodal distributions with an INN in a very abstract setting by considering the data as a distribution.
In practice, however, we have to sample from the random vectors $X$, $Y$ and $Z$.
The obtained samples $(x_i,y_i,z_i)_{i=1}^m$ induce empirical distributions of the form $P\raisebox{1pt}{$\strut{}^m_{X}$} = \tfrac{1}{m} \sum_{i=1}^m \delta(\cdot - x_i)$, which are then inserted into the loss \eqref{eq:ComLoss}.
To this end, we replace $P_{T^{-1}(Y,Z;u)}$ by \smash{${T^{-1}}_\# P\raisebox{1pt}{$\strut{}^m_{(X,Y)}$}$} and $P_{T(X;u)}$ by \smash{$T_\# P\raisebox{1pt}{$\strut{}^m_X$}$}, resulting in the overall problem
\begin{equation}\label{eq:DiscLoss}
\min_{u = (u_1,u_2)} \biggl\{\frac{1}{m}\sum_{i=1}^m \vert T_y(x_i;u)-y_i\vert^2 + \lambda_1 \mathscr D_K\bigl({T^{-1}}_\# P^m_{(Y,Z)},P^m_{X}\bigr) +\lambda_2 \mathscr D_K\bigl(T_\# P^m_{X},P^m_{(Y,Z)}\bigr) \biggr\},
\end{equation}
where $T$ depends on $u_1$ and $Z$ on $u_2$.
Here, the discrepancies can be easily evaluated based on \eqref{mercer_1}, which is just a finite sum.
Note that there are also other discretization approaches, which may have more desirable statistical properties, e.g., the unbiased version of the discrepancy outlined in \cite{MMD}.
For solving \eqref{eq:DiscLoss}, we want to use the gradient descent based Adam optimizer \cite{DBLP:journals/corr/KingmaB14}.
This approach leads to a sequence $\{T^m\}_{m \in \N}$ of INNs that approximate the true solution of the problem.
Deriving the gradients with respect to $u_1$ is a standard task for NNs.
In order to derive gradients with respect to $u_2$, we need a sampling procedure for $Z \sim \sum_{i=1}^{r} p_i(y) \mathcal{N}(\mu_i, \sigma^2I_k)$ such that the samples are differentiable with respect to $u_2$.
Without the differentiability requirement, we would just draw a number $l$ from a random vector $\gamma$ with range $\{1,\ldots,r\}$ and $P(\gamma = i) = p_i$ and then sample from $\mathcal{N}(\mu_l, \sigma^2I_k)$.
In the next paragraph, the Gumbel softmax trick is utilized to achieve differentiability.

\paragraph{Differentiable Sampling According to the GMM}
Recall that the distribution function of Gumbel(0,1) random vectors is given by $F(x) = \exp(-\exp(-x))$, which allows for efficient sampling.
The Gumbel reparameterization trick \cite{jang2016categorical, maddison2016concrete} relies on the observation that if we add Gumbel(0,1) random vectors to the logits $\log(p_i)$, then the $\argmax$ of the resulting vector has the same distribution as $\gamma$.
More precisely, for independent Gumbel(0,1) random vectors $G_1,...,G_r$ it holds
$$P\bigl(\argmax_i (G_i + \log(p_i)) = k\bigr) = p_k.$$
Unfortunately, $\argmax$ leads to a non differentiable expression.
To circumvent this issue, we replace $\argmax$ with a numerically more robust version of $\softmax_t\colon \R^r \to \R^r$ given by $\softmax_t(p)_i = \exp((p_i -m)/t)/  \sum_{i=1}^{n} \exp((p_i - m)/t),$ where $m = \max_i p_t$.
This results in a probability vector that converges to a corner of the probability simplex as the temperature $t$ converges to zero.
Instead of sampling from a single component, we then sample from all components and use the corresponding linear combination to obtain a sample from $Z$.
This procedure for differentiable sampling from the GMM is summarized in Algorithm~\ref{Algo:GMM}.
In our implementation, we use $t\approx 0.1$ and decrease $t$ during training.
\begin{algorithm}[t]
    \textbf{Input:} NN $w$, $y \in \R^n$, means $\mu_i$, standard deviation $\sigma$ and temperature $t$ \\
    \textbf{Output:} vector $z \in \R^k$ that (approximately) follows the distribution $P_{Z|Y}(y,\cdot)$
    \begin{algorithmic}[1]
    \State $p = w(y)$
    \State generate $g = g_1,...,g_k$ independent Gumbel(0,1) samples
    \State $p_i = \log(p_i) + g_i$
    \State $m = \max_i p_i$
    \State $s_i = \softmax_t(p)_i =\exp((p_i-m)/t)/\sum_{i=1}^{n} \exp((p_i-m)/t)$
    \State $z = \sum_{i=1}^k s_i \mu_i +
    \sigma \mathcal{N}(0,I_k)$
    \end{algorithmic}
    \caption{Differentiable Sampling according to the GMM}
    \label{Algo:GMM}
\end{algorithm}

\paragraph{Enforcing the Lipschitz Constraint}
Fitting discrete and continuous distributions in a coupled process has proven to be difficult in many cases.
Gaujac et al.~\cite{gaujac2018gaussian} encounter the problem that both models learn with different speeds and hence tend to get stuck in local minima. 
To avoid such problems, we enforce additional network properties, ensuring that the latent space is used in the correct way.
In this subsection, we argue that a $L^2$-penalty on the subnetworks weights of the $\INN$ prevents the Lipschitz constants from exploding.
This possibly reduces the chance of reaching undesirable local minima.
Further, we want modalities to be preserved, i.e., both the latent and the input data distribution should have an equal number of modes.
If this is not the case, modes have to be either split or merged and hence the Lipschitz constant of the forward or backward map explode, respectively.
Consequently, if we use a GMM for $Z$, controlling the Lipschitz constants is a natural way of forcing the INN to use the multimodality of $Z$.

Fortunately, following the techniques in \cite{behrmann2020understanding}, we can simultaneously control the Lipschitz constants of $\INN$ blocks and their inverse given by
$$S(x_1,x_2) = \bigl(x_1 ,\, x_2 \odot \exp(s(x_1))+t(x_1)\bigr) \quad \text{ and } \quad S^{-1}(x_1,x_2) = \bigl(x_1 ,\, (x_2 - t(x_1)) \oslash \exp(s(x_1))\bigr)$$ 
via bounds on the Lipschitz constants of the NNs $s$ and $t$.
To prevent exploding values of the exponential, $s$ is clamped, i.e., we restrict the range of the output components $s_i$ to $[- \alpha, \alpha]$ by using $s_{\textnormal{clamp}}(x) = \frac{2\alpha}{\pi} \arctan(s(x)/\alpha)$ instead of $s$, see \cite[Eq.~7]{ardizzone2020conditional}.
Similarly, we clamp $t$.
Note that this does not affect the invertibility of the INN.
Then, we can locally estimate on any box $[a,b]^d$ that
\begin{equation}\label{eq:LipEst}
    \max \bigl\{\Lip(S), \Lip(S^{-1})\bigr\} \leq c + c_1 \Lip(s) + c_2 \Lip(t),
\end{equation}
where the constant $c$ is an upper bound of $g$ and $1/g$ on the clamped range of $s$, $c_1$
depends on $a$, $b$ and $\alpha$, and $c_2$ depends on $\alpha$.
Provided that the activation is 1-Lipschitz, $\Lip(s)$ showing up in \eqref{eq:LipEst} is bounded in terms of the respective weight matrices $A_i$ of $s$ by
$$\Lip(s) \leq \prod_{i=1}^{l} \Vert A_i \Vert_2,$$
which follows directly from the chain rule.
This requirement is fulfilled for most activations and in particular for the often applied ReLU activation.
Note that the same reasoning is applicable for $\Lip(t)$.

Now, we want to extend \eqref{eq:LipEst} to our network structure.
To this end, note that the layer $T_l$ defined in \eqref{eq:DefBlock} can be decomposed as $T_l = S_{1,l} \circ S_{2,l}$ with
\begin{align*}
S_{1,l}(v_1,v_2) &= \bigl(v_1 ,\, v_2  \odot  \exp \bigl(s_{1,l}(v_1)\bigr)+t_{1,l}(v_1)\bigr), \\S_{2,l}(x_1,x_2) &= \bigl(x_1 \odot \exp\bigl(s_{2,l}(x_2)\bigr)+t_{2,l}(x_2),\, x_2\bigr).
\end{align*}
Incorporating the estimate \eqref{eq:LipEst}, we can bound the Lipschitz constants for $T_l$ by $$\max \bigl\{\Lip(T_l), \Lip(T_l^{-1})\bigr\} \leq \prod_{i=1}^2 \bigl(c+ c_1 \Lip(s_{i,l}) + c_2 \Lip(t_{i,l})\bigr).$$
Then, $\Lip(T)$ and $\Lip(T^{-1})$ of the INN $T$ are bounded by the respective products of the bounds on the individual blocks.

To enforce the Lipschitz continuity, we add the $L^2$-penalty $L_{\reg} = \frac{\lambda_{\reg}}{2} \sum_{i=1}^{l} \Vert A_i \Vert^2$, where $\lambda_{\reg}$ is the regularization parameter and $(A_i)_{i=1}^{m}$ are the weight matrices of the subnetworks $s_{i,l}$ and $t_{i,l}$ of the INN $T$. 
This is a rather soft way to enforce Lipschitz continuity of the subnetworks.
Other ways are described in \cite{gulrajani2017improved} and include weight clipping and gradient penalties.
Even with the additional $L^2$-penalty, our model can get stuck in bad local minima, and it might be worthwhile to investigate other (stronger) ways to prevent mode merging.
Further, the derivation indicates that clamping the outputs of the subnetworks helps to control the Lipschitz constant.
Another way to prevent merging of modes is to introduce a sparsity promoting loss on the probabilities, namely $(\sum_i p_i^{1/2})^2$.

\paragraph{Padding}
In practice, the dimension $d$ is often different from the intrinsic dimension of the data modeled by $X \colon \Omega \to \R^d$.
As $Z \colon \Omega \to \R^k$ is supposed to describe the lost information during the forward process, it appears sensible to choose $k<d-n$ in such cases.
Similarly, if $Y\colon \Omega \to \R^n$ has discrete range, we usually assign each label to a unit vector $e_i$, effectively increasing the overall dimension of the problem.
In both cases, $d \neq n +k$ and we can not apply our framework directly.
To resolve this issue, we enlarge the samples $x_i$ or $(y_i,z_i)$ with random Gaussian noise of small variance.
Note that we could also use zero padding, but then we would lose some control on the invertibility of $T$ as the training data would not span the complete space.
Clearly, padding can be also used to increase the overall problem size, adding more freedom for learning certain structures.
This is particularly useful for small scale problems, where the capacity of the NNs is relatively small.

As they are irrelevant for the problem that we want to model, the padded components are not incorporated into our proposed loss terms.
Hence, we need to ensure that the model does not use the introduced randomness in the padding components of $x = (x_{\data},x_{\pad})$ and $z = (z_{\data},z_{\pad})$ for training.
To overcome this problem, we propose to use the additional loss
\[L_{\pad} = \bigl|T_{\pad}(x_i)- z_{i,\pad}\bigr|^2 + \bigl|x_i - T^{-1}\bigl(T_y(x_i),z_{i,\pad},T_{z_{\data}}(x_i)\bigr)\bigr|^2  + \bigl| T^{-1}_{\pad}(y_i,z_{i,\pad},z_i)\bigr|^2\]
Here, the first term ensures ensures that the padding part of $T$ stays close to zero, the second one helps to learn the correct backward mapping independent of $z_{\pad}$ and the third one penalizes the use of padding in the $x$-part.
Note that the first two terms were already used in \cite{ardizzone2018analyzing}.
Clearly, we can compute gradients of the proposed loss with respect to the parameters $u_2$ of $w$ in the GMM as the loss depends on $z_i$ in a differentiable manner.
 
\section{Experiments}\label{sec:Examples}
\renewcommand*{\thefootnote}{\arabic{footnote}}
In this section, we underline our theoretical findings with some numerical experiments.
For all experiments, the networks are chosen such that they are comparable in capacity and then trained using a similar amount of time or until they stagnate. 
The exact architectures and obtained parameters are provided as supplementary material.
Our PyTorch implementation builds up on the freely available FrEIA package\footnote{\url{https://github.com/VLL-HD/FrEIA}}.
\paragraph{8 Modes Problem}
For this problem, $X\colon \Omega \to \R^2$ is chosen as a GMM with 8 modes, i.e.,
\[X  \sim \frac{1}{8} \sum_{i=1}^{8} \mathcal N(\mu_i, 0.04I_2) \quad \text{with} \quad \mu_i = \bigl(\cos(\tfrac\pi 8(2i -1)), \sin(\tfrac\pi 8(2i -1))\bigr)\big/\sin(\tfrac\pi 8).\]
All modes are assigned one of the four colors red, blue, green and purple.
Then, the task is to learn the mapping from position to color and, more interestingly, the inverse map where the task is to reconstruct the conditional distribution $P_{(X|Y)}(y,\cdot)$ given a color $y$.
Two versions of the data set are visualized in Fig.~\ref{colorsets}, where the data points $x$ are colored according to their label. 
Here, the first set is based on the color labels from \cite{ardizzone2018analyzing} and the second one uses a more challenging set of labels, where modes of the same color are further away from each other.
For this problem, we are going to use a two dimensional latent distribution.
Hence, the problem dimensions are $d=2$, $n=4$ and $k=2$.
\begin{figure}[t]
    \centering
    \includegraphics[width=0.6\textwidth]{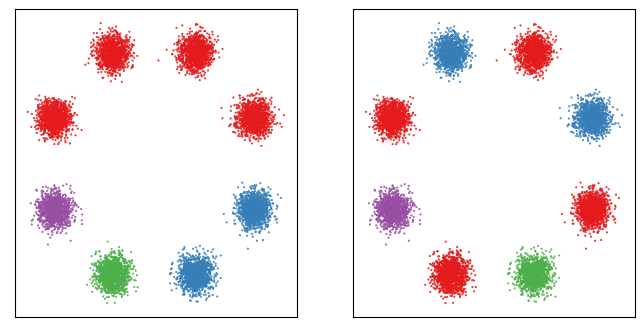}
    \caption{Positions of the 8 modes experiment with two different color label sets.}
    \label{colorsets}
\end{figure}
\begin{figure}[t]
    \centering
    \begin{subfigure}[t]{0.24\textwidth}
    \includegraphics[width=\textwidth]{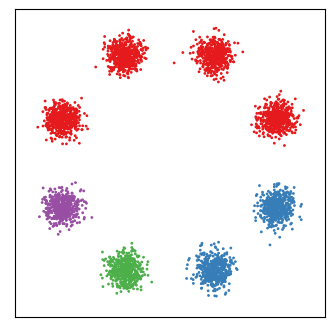}
    \caption{Ground truth.} \label{fig:gt}
    \end{subfigure}
    \begin{subfigure}[t]{0.24\textwidth}
    \includegraphics[width=\textwidth]{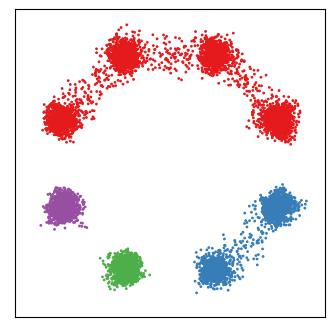}
    \caption{Reconstruction with strong $L^2$-penalty.} \label{fig:l2}
    \end{subfigure}
    \begin{subfigure}[t]{0.24\textwidth}
    \includegraphics[width=\textwidth]{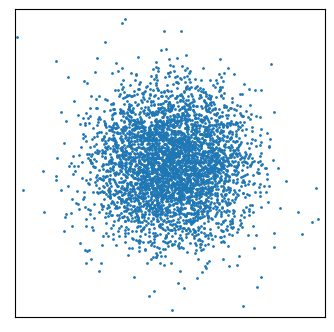}
    \caption{Latent samples with padding noise.} \label{fig:padding}
    \end{subfigure}
    \begin{subfigure}[t]{0.24\textwidth}
    \includegraphics[width=\textwidth]{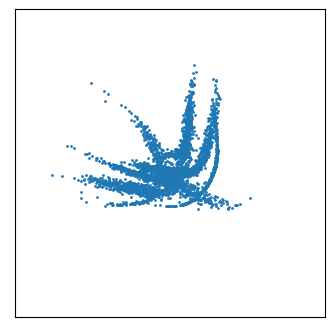}
    \caption{Latent samples without padding noise.} \label{fig:padding2}
    \end{subfigure}
    \caption{Effect of $L^2$-regularization and of padding on sampling quality.}
\end{figure}

As $d < k +n$, we have to use random padding.
Here, we increase the dimension to $16$, i.e., both input and output are padded to increase capacity.
Unfortunately, as the padding is not part of the loss terms, the argumentation in Remark~\ref{Rem:Lx_Est} is not applicable anymore, i.e., training on $L_y$ and $L_z$ is not sufficient.
This observation is supported by the fact that training without $L_x$ and $L_{\pad}$ yields undesirable results for the 8 modes problem:
The samples in Fig.~\ref{fig:padding} are created by propagating the padded data samples through the trained INN.
As enforced by $L_z$, we obtain the standard normal distribution.
However, if replace the padded components by zero, we obtain the samples in Fig.~\ref{fig:padding2}, which clearly indicates that the INN is using the random padding to fit the latent distribution.

Next, we illustrate the effect of the additional $L^2$-penalty introduced in Section~\ref{sec:NumImp}.
In Fig.~\ref{fig:l2}, we provide a sampled data distribution from an INN with $Z \sim \mathcal N(0,I_2)$ trained with large $\lambda_\reg$.
The learned model is not able to capture the multimodalities, since the $L^2$-penalty controls $\Lip(T^{-1})$.
This confirms our theoretical findings in Section~\ref{sec:INN}, i.e., that splitting up the standard normal distribution in order to match the true data distribution leads to exploding $\Lip(T^{-1})$.

\begin{figure}[t]
    \centering
    \begin{subfigure}[t]{0.49\textwidth}
        \includegraphics[width=\textwidth]{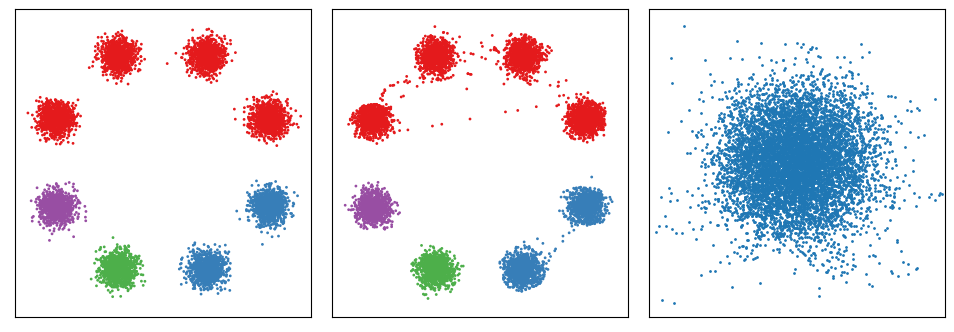}
        \caption{INN for first label set.}\label{fig:INN_comp_a}
    \end{subfigure}
    \hspace{.05cm}
    \begin{subfigure}[t]{0.49\textwidth}
        \includegraphics[width=\textwidth]{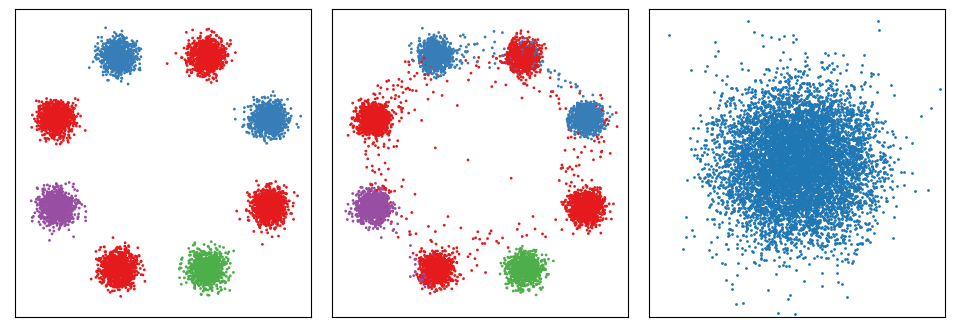}
        \caption{INN for second label set.}\label{fig:INN_comp_b}
    \end{subfigure}
    \vspace{.2cm}

    \begin{subfigure}[t]{0.49\textwidth}
        \includegraphics[width=\textwidth]{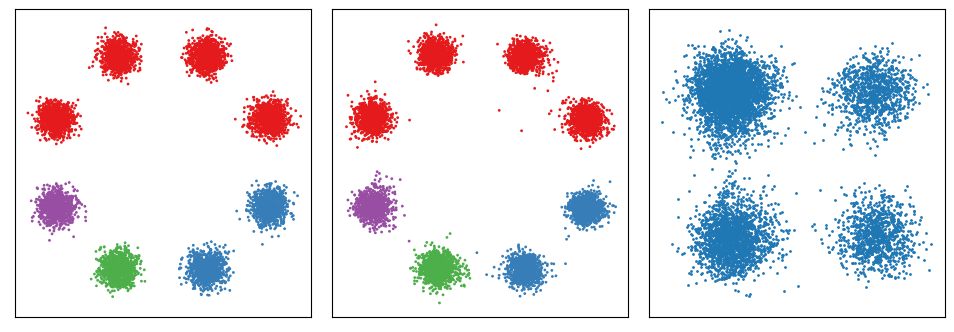}
        \caption{Multimodal INN for first label set.}\label{fig:INN_comp_c}
    \end{subfigure}
    \hspace{.05cm}
    \begin{subfigure}[t]{0.49\textwidth}
        \includegraphics[width=\textwidth]{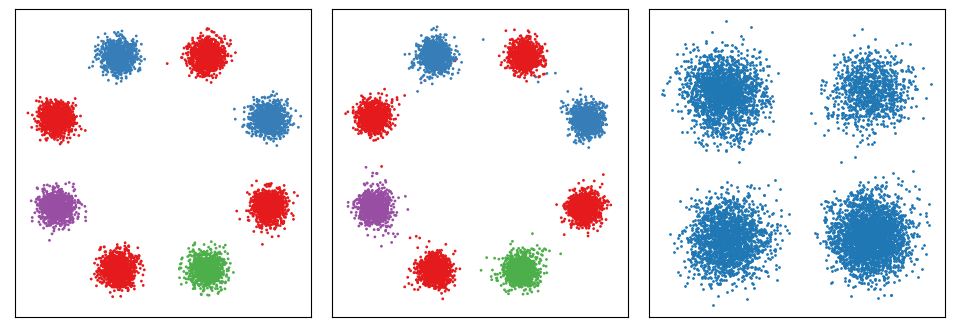}
        \caption{Multimodal INN for second label set.}\label{fig:INN_comp_d}
    \end{subfigure}

    \caption{Predicted labels, backward samples and latent samples for standard INNs and multimodal INNs.}\label{INN_comp}
\end{figure}
\begin{table}[t]
\centering
 \begin{tabular}{||c c c c||} 
 \hline
  red & blue & green & purple \\ [0.5ex] 
 \hline\hline
 0.25 & 0 & 0 & 0\\ 
 \hline
 0.25 & 0.5& 1 & 1\\
 \hline
 0.25 & 0.5 & 0 & 0 \\
 \hline
 0.25 &  0 & 0 & 0 \\
 \hline
\end{tabular}
\hspace{.5cm}
\begin{tabular}{||c c c c||} 
 \hline
  red & blue & green & purple \\ [0.5ex] 
 \hline\hline
 0.25 & 0.49 & 0 & 0 \\ 
 \hline
 0.25 & 0.51 & 0 & 0 \\
 \hline
 0.25 & 0 & 0 & 0 \\
 \hline
 0.25 &  0  & 1 & 1 \\
 \hline
\end{tabular}
\hspace{.5cm}
\caption{Learned probabilities for the results in Figs.~\ref{fig:INN_comp_c} (left) and \ref{fig:INN_comp_d} (right).\label{tab:probs}}
\end{table}
Finally, we investigate how a GMM consisting of 4 Gaussians with standard deviation $\sigma=1$ and means $\{(-4,-4),(-4,4),(4,4),(4,-4)\}$ compares against $\mathcal N (0,I_2)$ as latent distribution. 
In our experiments, we investigate the predicted labels, the backward samples and the forward latent fit for the two different label sets.
Our obtained results are depicted in Figs.~\ref{fig:INN_comp_a}-\ref{fig:INN_comp_d}, respectively.
The left column in each figure contains the label prediction of the INN, which is quite accurate in all cases.
More interesting are the reconstructed input data distributions in the middle columns.
As indicated by our theory, the standard INN has difficulties separating the modes.
Especially in Fig.~\ref{fig:INN_comp_b} many red points are reconstructed between the actual modes by the standard INN.
Taking a closer look, we observe that the sampled red points still form one connected component for both examples, which can be explained by the Lipschitz continuity of the inverse INN.
The results based on our multimodal INN in Fig.~\ref{fig:INN_comp_c} and \ref{fig:INN_comp_d} are much better, i.e., the modes are separated correctly.
Additionally, the learned discrete probabilities larger than zero match the number of modes for the corresponding color.
Note that our model learned the correct probabilities almost perfectly, see Tab.~\ref{tab:probs}.
In the right column of every figure, we included the latent samples produced by propagating the data samples through the INN.
Here, we observe that the standard INN is able to perfectly fit the standard normal distribution in both cases.
The latent fits for the multimodal INN also capture the Gaussian mixture model well, although the sizes of the modes vary a bit. 

\paragraph{MNIST image inpainting}
\begin{figure}[p]
    \centering
    \begin{subfigure}[t]{0.7\textwidth}
   \includegraphics[width=\textwidth]{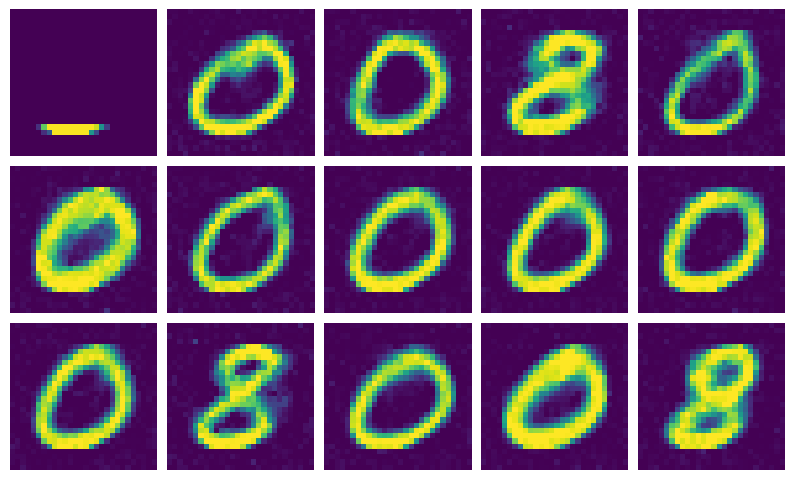}
    \label{MNIST_12}
    \end{subfigure}
    
    \begin{subfigure}[t]{0.7\textwidth}
    \includegraphics[width=\textwidth]{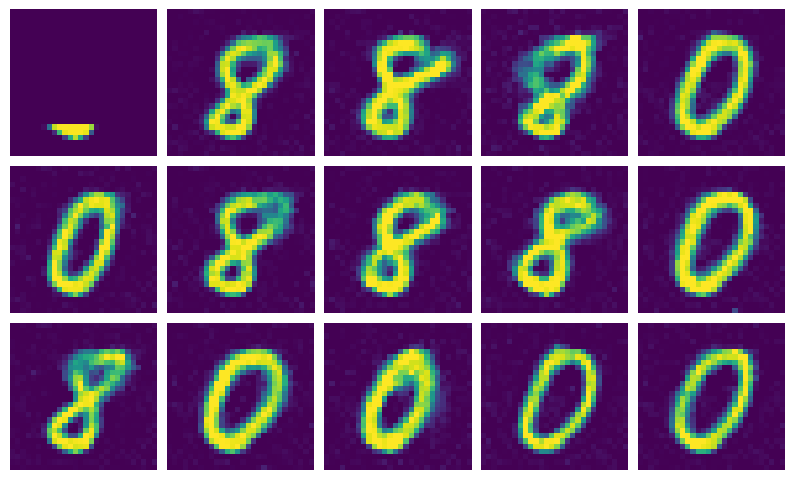}
    \label{MNIST_44}
    \end{subfigure}
    
    \begin{subfigure}[t]{0.7\textwidth}
    \includegraphics[width=\textwidth]{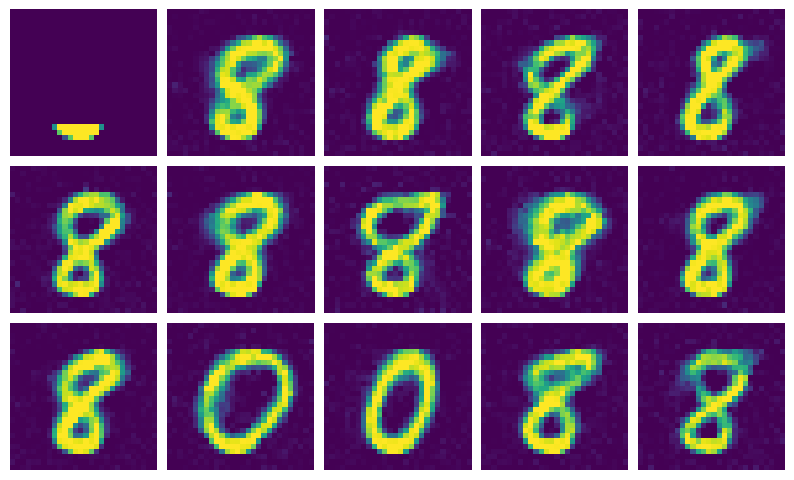}
    \label{MNIST_512}
    \end{subfigure}
    \caption{Samples created by a multimodal INN. Condition $y$ in upper left corner.}
    \label{MNIST_inpainting}
\end{figure}
To show that our method also works for higher dimensional problems, we apply it for image inpainting.
For this purpose, we restrict the MNIST dataset \cite{MNIST} to the digits 0 and 8. 
Further, the forward mapping for an image $x \in [0,1]^{28,28}$ is defined as $f\colon [0,1]^{28,28} \rightarrow [0,1]^{6,28}$ with $x \mapsto (x)_{22 \leq i \leq 27, 0 \leq j \leq 27}$, i.e., the upper part of the image is removed.
Note that the problem is chosen in such a way that it is unclear whether the lower part of the image should be completed to a zero or an eight.
However, for certain types of inputs, it is of course more likely that they belong to one or the other. 
For this example, the problem dimensions are chosen as $d=784$, $n=164$ and $k=14$, i.e., we use a $14$-dimensional latent distribution.
Further, the latent distribution is chosen as a GMM with two modes and $\sigma = 0.6$.
Unfortunately, forcing our model to learn a good separation of the modes is non-trivial and we use the following heuristic for choosing the means of the two mixture components.
We initialize the INN with random noise (of small scale) and choose the means as a scalar multiple of the forward propagations of the INN in the $Z$-space, i.e., such that they are well-separated.
This initialization turned out to be necessary for learning a clean splitting between zeros and eights.
Furthermore, it is beneficial to add noise to both the $x$-data and $y$-data to avoid artifacts in the generation process, since MNIST as a dataset itself is quite flat, see also \cite{ardizzone2020conditional}.
Although not required, we found it beneficial to replace the second term in the padding loss by $\bigl|x_i - T^{-1}\bigl(y_i,z_{i,\pad},T_{z_{\data}}(x_i)\bigr)\bigr|^2$, which serves as a backward MSE and makes the images sharper.

Our obtained results are provided in Fig.~\ref{MNIST_inpainting}.
The learned probabilities for the displayed inputs are $(0.83,0.17)$, $(0.46,0.54)$ and $(0.27,0.73)$, respectively, where the $i$-th number is the probability of sampling from the $i$-th mean in the latent space. 
Taking a closer look at the latent space, we observe that the first mode mainly corresponds to sampling a zero, whereas the second one corresponds to sampling an eight. 
A nice benefit is that the splitting between the two modes provides a probability estimate whether a sample $y$ belongs to a zero or an eight.
In order to understand if this estimate is reasonable, we perform the following sanity check.  
We pick $1000$ samples $(y_i)_{i=1}^{1000}$ from the test set and evaluate the probabilities $p(y_i)$ that those samples correspond to a zero.
For every sample $y_i$, we perform Alg.~\ref{heuristic_prob_estimate} to get an estimate $e(y_i)$ for the likeliness of being a zero.
Essentially, the algorithm performs a weighted sum over all samples with label $0$, where the weights are determined by the closeness to the sample $y_i$ based on a Gaussian kernel.
Then, we calculate the mean of $|p(y_i)-e(y_i)|$ over $1000$ random samples, resulting in an average error of around $0.10-0.11$.
Given that the mean of $|0.5-e(y_i)|$ is around $0.23$, we conclude that the estimates from our probability net are at least somewhat meaningful and better than an uneducated guess.
Intriguingly, for the images in Fig.~\ref{MNIST_inpainting} the estimates are $(0.82,0.18)$, $(0.51,0.49)$ and $(0.25,0.75)$, respectively, which nicely match the probabilities predicted by the INN.

\begin{algorithm}[t]
    \textbf{Input:} sample $y$, comparison samples $(y_i)_{i=1}^m$ with labels $l_i \in \{0,8\}$\\
    \textbf{Output:} estimate for the probability of $y$ belonging to a zero
    \begin{algorithmic}[1]
    \State $e$ = 0, $s$ = 0
    \For{$i=1,\ldots,m$}
        \If{$l_i =0$}:
        \State $e = e + \exp(-\vert y- y_i \vert^2)$
         \EndIf
    \State $s = s + \exp(-\vert y- y_i \vert^2)$
    \EndFor
    \State return $\frac{e}{s}$
    \end{algorithmic}
    \caption{Probability estimate}
    \label{heuristic_prob_estimate}
\end{algorithm}

\paragraph{Inverse Kinematics}
\begin{algorithm}[t]
    \textbf{Input:} prior $p$, some vector $y$, forward operator $f$, threshold $\eps$\\
    \textbf{Output:} $m$ samples from the approximate posterior $p(\cdot|y)$
    \begin{algorithmic}[1]
    \For{$i=1,\ldots,m$}
    \Repeat
    \State sample $x_{i}$ according to the prior $p$ and accept if $\vert y-f(x_{i})\vert^2 < \eps$
    \Until sample $x_{i}$ is accepted
    \EndFor
    \State return samples $(x_i)_{i=1}^m$
    \end{algorithmic}
    \caption{Approximate Bayesian computation}
    \label{Algo:ABC}
\end{algorithm}
In \cite{Kruse2019BenchmarkingIA} the authors propose to benchmark invertible architectures on a problem inspired by inverse kinematics.
More precisely, a vertically movable robot arm with 3 segments connected by joints is modeled mathematically as 
\begin{equation}\label{eq:Arm}
\begin{aligned}
&y_1 = h + l_1 \sin(\theta_1) + l_2 \sin(\theta_2-\theta_1) + l_3 \sin(\theta_3-\theta_2-\theta_1),\\
&y_2 = l_1 \cos(\theta_1) + l_2 \cos(\theta_2-\theta_1) + l_3 \cos(\theta_3-\theta_2-\theta_1),
\end{aligned}
\end{equation}
where the parameter $h$ represents the vertical height, $\theta_1,\theta_2, \theta_3$ are the joint angles, $l_1,l_2,l_3$ are the arm segment lengths and $y = (y_1,y_2)$ denotes the modeled end positions.
Given the end position $y$, our aim is to reconstruct the arm parameters $x = (h, \theta_1,\theta_2,\theta_3)$.
For this problem, the dimensions are chosen as $d=4$, $n=2$ and $k=2$, i.e., no padding has to be applied.
This is an interesting benchmark problem as the conditional distributions can be multimodal, the parameter space is large but still interpretable and the forward model \eqref{eq:Arm}, denoted by $f$, is more complex compared to the preceding tasks.

As a baseline, we employ approximate Bayesian computation (ABC) to get $x$ samples satisfying $y=f(x)$, see Alg.~\ref{Algo:ABC}.
For this purpose, we impose the priors $h \sim \frac{1}{2}\mathcal{N}(1,\frac{1}{64}) + \frac{1}{2}\mathcal{N}(-1,\frac{1}{64})$, $\theta_1,\theta_2,\theta_3 \sim \mathcal{N}(0,\frac{1}{4})$ and set the joint lengths to $l = (0.5,0.5,1)$.
Additionally, we sample using a INN with $Z\sim \mathcal N(0,I_2)$ and a multimodal INN with $Z \sim p_1(y) \mathcal N ((-2,0),0.09I_2) + p_2(y) \mathcal N ((2,0),0.09I_2)$.
At this point, we want to remark that computing accurate samples using ABC is very time consuming and not competitive against INNs. In particular the sampling procedure is for one fixed $y$, whereas the INN approach recovers the posteriors for all possible $y$.

\afterpage{\begin{figure}[t]
    \begin{subfigure}[t]{0.32\textwidth}
        \includegraphics[width=\textwidth]{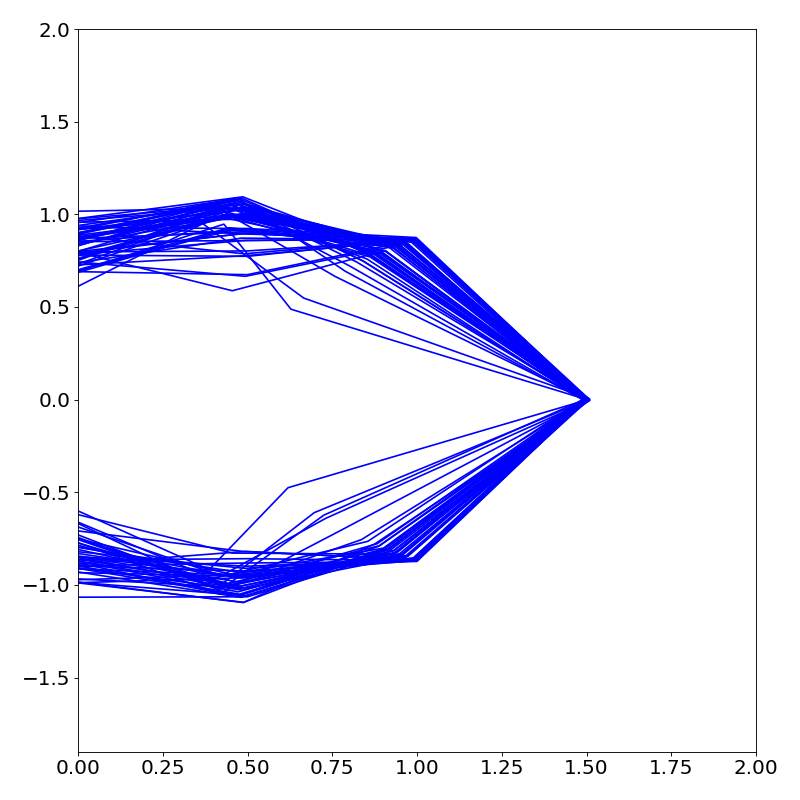}
        \caption{Metropolis--Hastings for $y$.}\label{fig:rej_1}
    \end{subfigure}
    \begin{subfigure}[t]{0.32\textwidth}
        \includegraphics[width=\textwidth]{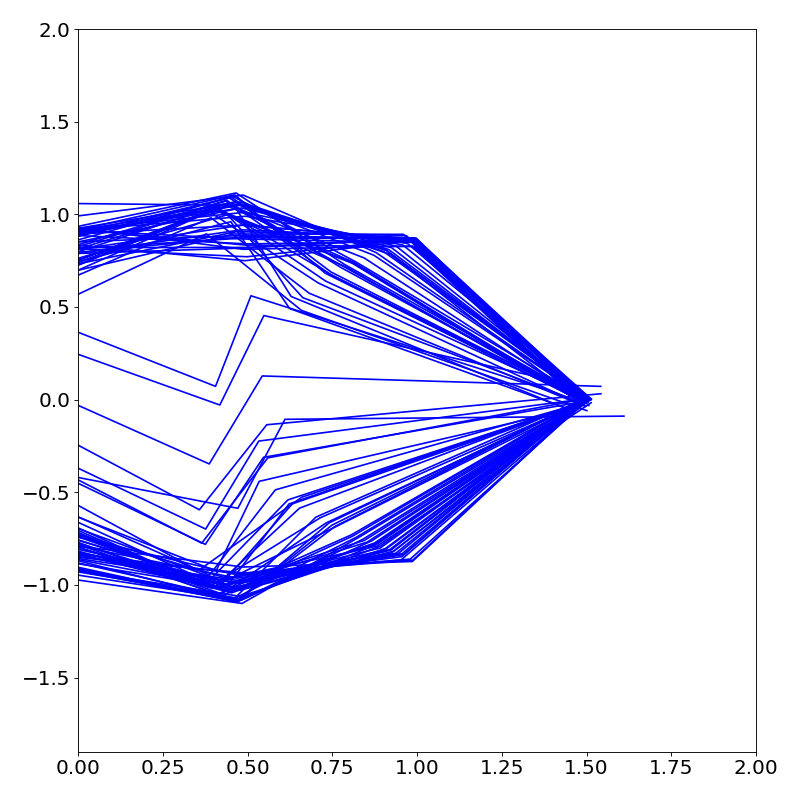}
        \caption{INN for $y$.}\label{fig:inv_inn_1}
    \end{subfigure}
    \begin{subfigure}[t]{0.32\textwidth}
        \includegraphics[width=\textwidth]{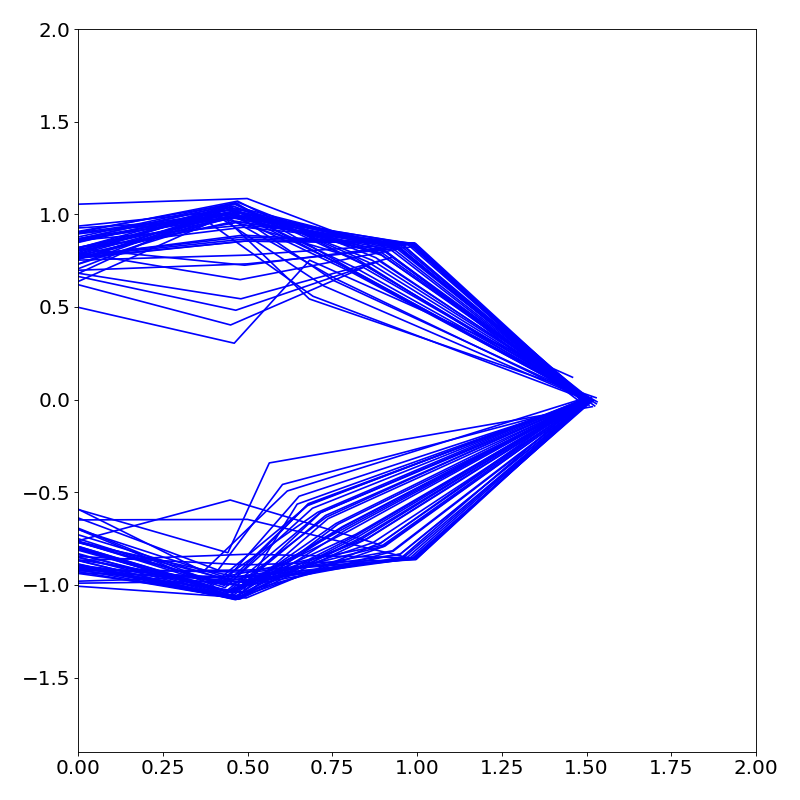}
        \caption{Multimodal INN for $y$.}\label{fig:inv_ginn_1}
    \end{subfigure}
    
    \begin{subfigure}[t]{0.32\textwidth}
        \includegraphics[width=\textwidth]{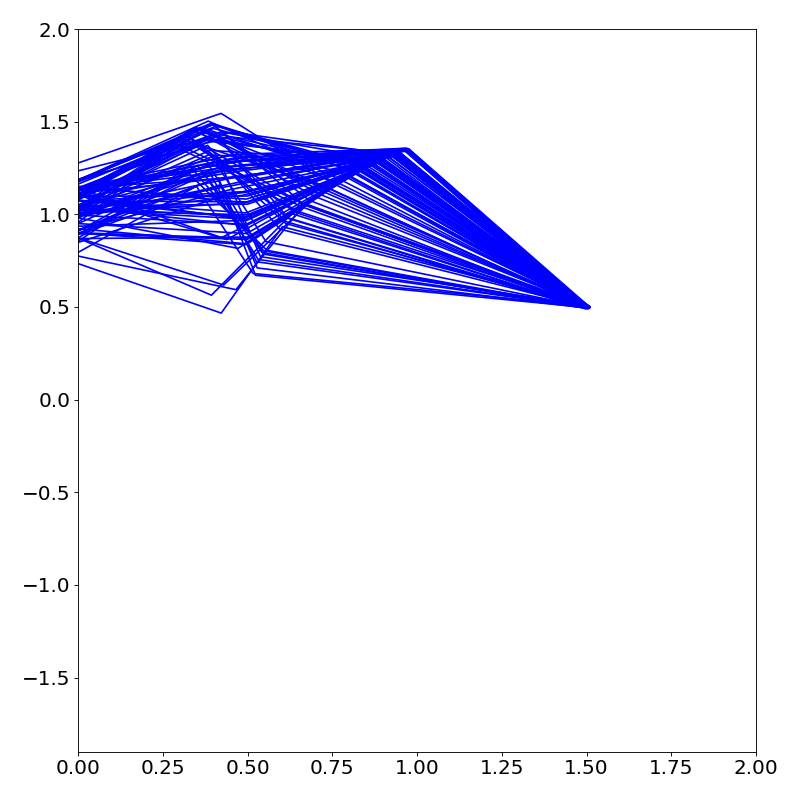}
        \caption{Metropolis--Hastings for $\tilde y$.}\label{fig:rej_2}
    \end{subfigure}
    \begin{subfigure}[t]{0.32\textwidth}
        \includegraphics[width=\textwidth]{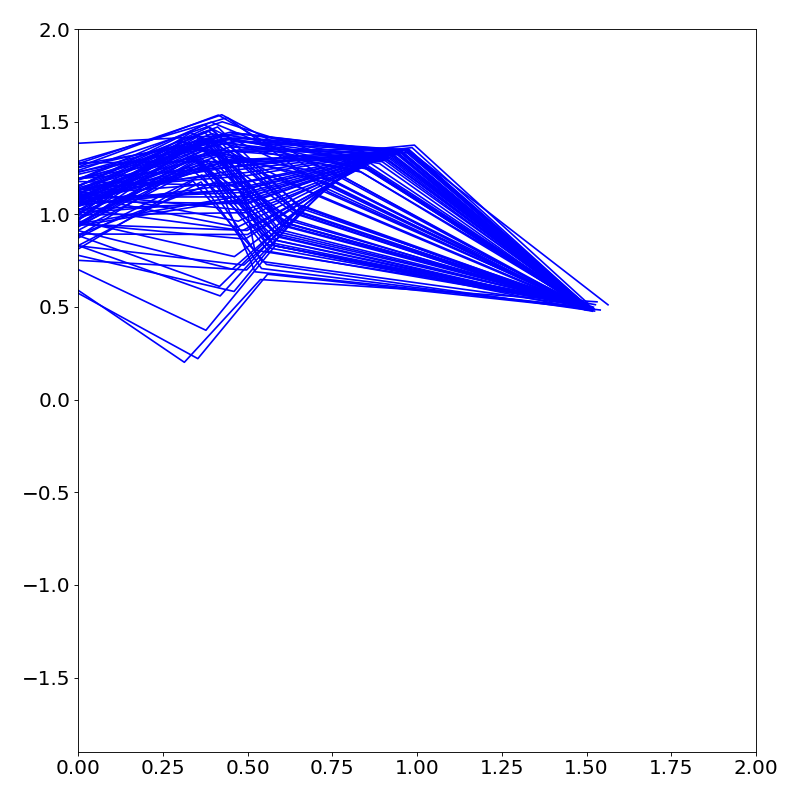}
        \caption{INN for $\tilde y$.}\label{fig:inv_inn_2}
    \end{subfigure}
    \begin{subfigure}[t]{0.32\textwidth}
     \includegraphics[width=\textwidth]{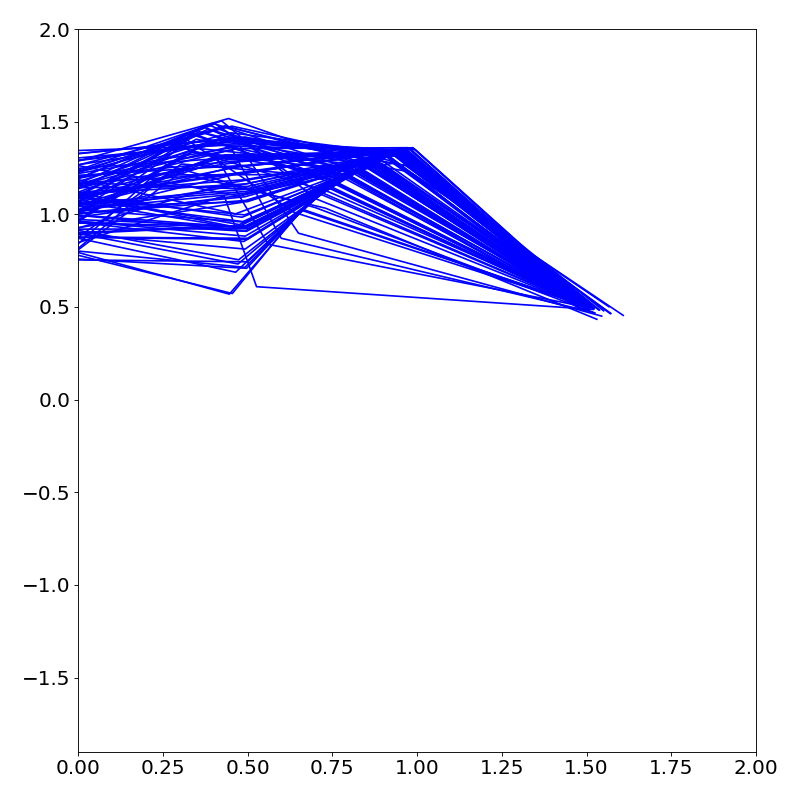}
        \caption{Multimodal INN for $\tilde y$.}\label{fig:inv_ginn_2}
    \end{subfigure}

    \caption{Sampled arm configurations for fixed positions $y$ and $\tilde y$ using different methods.}\label{INN_comp2}
\end{figure}
\begin{table}[t]
\centering
\begin{tabular}{||c | c c c c||} 
 \hline
 Method & $\mathscr{D}_K(x_{i,y},\tilde x_{i,y})$ & $R(y)$ &$\mathscr{D}_K(x_{i, \tilde y},\tilde x_{i,\tilde y})$ & $R(\tilde y)$\\ [0.5ex] 
 \hline\hline
 INN& 0.033 & 0.009 & 0.013 & 0.0003 \\ 
 \hline
Multimodal INN & 0.030 & 0.0003 & 0.010 & 0.0011 \\
\hline
\end{tabular}
\caption{Evaluation of INN and multimodal INN with samples $\tilde x_{i,y}$ created by Alg.~\ref{Algo:ABC}  and samples $x_{i,y}$ from $T^{-1}(y,Z)$.}
\label{tab:inv_kin}
\end{table}}
The predicted posteriors for $y = (0,1.5)$ and $\tilde y = (0.5,1.5)$ are visualized in Fig.~\ref{INN_comp2}, where we plot the four line segments determined by the forward model \eqref{eq:Arm}.
Note that the learned probabilities $p(y) = (0.52,0.48)$ and $p(\tilde y) = (0,1)$ nicely fit the true modality of the data. 
As an approximation quality measure, we first calculate the discrepancy $\mathscr{D}_K(\tilde x_i,x_i)$, where $(\tilde x_i)_{i=1}^{m}$, $m=2000$, are samples generated by Alg.~\ref{Algo:ABC} and $(x_i)_{i=1}^{m}$ are generated by the respective INN methods, i.e., by sampling from $T^{-1}(y,Z)$.
This quantity is then averaged over 5 runs with a single set of rejection samples.
Based on the computed samples and the forward model, we also calculate the resimulation error $R(y) = \frac{1}{m} \sum_i \vert f(x_i)- y\vert^2$.
Both quality measures are computed five times and the average of the obtained results is provided in Tab.~\ref{tab:inv_kin}.
Note that we are evaluating the posterior only for two different $y$.
As the loss function is based on the continuous distribution, results are 
likely to be slightly noisy.

Overall, this experiment confirms that our method excels at modeling multimodal distributions whereas for unimodal a standard INN suffices as well. 
Further, we observe that our method is flexible enough to model both types of distributions even in the case when $y$ is not discrete.

\section{Conclusions}\label{sec:Conclusions}
In this paper, we investigated the latent space of INNs for inverse problems.
Our theoretical results indicate that a proper choice of $Z$ is crucial for obtaining stable INNs with reasonable Lipschitz constants.
To this end, we parameterized the latent distribution $Z$ in dependence on the labels $y$ in order to model properties of the input data.
This resulted in significantly improved behavior of both the INN $T$ and $T^{-1}$.
In some sense, we can interpret this as adding more explicit knowledge to our problem.
Hence, the INN only needs to fit a simpler problem compared to the original setting, which is usually possible with simpler models.

Unfortunately, establishing a result similar to Theorem~\ref{sampl_standard_normal} that also incorporates approximation errors seems quite hard.
A first attempt under rather strong assumptions is given in Theorem~\ref{thm:TV_err}.
Relaxing the assumptions such that they fit to the chosen loss terms would give a useful error estimate between the sampled posterior based on the INN and the true posterior.
Further, it would be interesting to apply our model to real word inverse problems.
Possibly, our proposed framework can be used to automatically detect multimodal structures in such data sets.
To this end, the application of more sophisticated controls on the Lipschitz constants \cite{ComPes20, gouk2018regularisation} could be beneficial.
Additionally, we could also enrich the model by learning the parameters of the permutation matrices using a similar approach as in \cite{HHNP19,glow}.
Finally, it would be interesting to further investigate the role and influence of padding for the training process.

\section*{Acknowledgment}
The authors want to thank Johannes Hertrich and Gabriele Steidl for fruitful discussions on the topic.
\bibliographystyle{abbrv}
\bibliography{sample}

\end{document}